\documentclass{article} 
\usepackage{iclr2026_conference,times}

\usepackage{amsmath,amsfonts,bm}

\def\eqref#1{equation~\ref{#1}}

\def\1{\bm{1}}

\usepackage{amssymb}
\usepackage{amsthm}
\theoremstyle{plain} 
\newtheorem{theorem}{Theorem}[section]      
\newtheorem{lemma}[theorem]{Lemma}          
\newtheorem{corollary}[theorem]{Corollary}  
\newtheorem{proposition}[theorem]{Proposition}
\usepackage{graphicx}
\usepackage{subcaption} 
\theoremstyle{definition} 
\newtheorem{definition}[theorem]{Definition}
\newtheorem{assumption}[theorem]{Assumption}

\theoremstyle{remark} 
\newtheorem*{remark}{Remark}

\DeclareMathAlphabet{\mathsfit}{\encodingdefault}{\sfdefault}{m}{sl}
\SetMathAlphabet{\mathsfit}{bold}{\encodingdefault}{\sfdefault}{bx}{n}

 \usepackage{multirow}
\usepackage{tabularx,makecell,array,xcolor,booktabs}
\definecolor{emerald}{RGB}{0,158,115}
\definecolor{violet}{RGB}{148,0,211}
\newcolumntype{Y}{>{\raggedleft\arraybackslash}X}

\usepackage{xcolor}

\usepackage{mathrsfs}

\usepackage{hyperref}
\usepackage{adjustbox}
\usepackage{url}
\usepackage{enumitem}

\title{Adaptive Mamba Neural Operators}

\author{Zeyuan Song \& Zheyu Jiang\\
School of Chemical Engineering\\
Oklahoma State University\\
Stillwater, OK 74078, USA \\
\texttt{\{taekwon.song,zheyu.jiang\}@okstate.edu}
}

\iclrfinalcopy 
\begin{document}

\maketitle

\begin{abstract}
Accurately solving partial differential equations (PDEs) on arbitrary geometries and a variety of meshes is an important task in science and engineering applications. In this paper, we propose Adaptive Mamba Neural Operators (AMO), which integrates reproducing kernels for state-space models (SSMs) rather than the kernel integral formulation of SSMs. This is achieved by constructing Takenaka-Malmquist systems for the PDEs. AMO offers new representations that align well with the adaptive Fourier decomposition (AFD) theory and can approximate the solution manifold of PDEs on a wide range of geometries and meshes. In several challenging benchmark PDE problems in the fields of fluid physics, solid physics, and finance on point clouds, structured meshes, regular grids, and irregular domains, AMO consistently outperforms state-of-the-art solvers in terms of relative $L^2$ error. Overall, this work presents a new paradigm for designing explainable neural operator frameworks. The code is available at \url{https://github.com/checlams/AMO}.
\end{abstract}

\section{Introduction}
A wide range of scientific and engineering phenomena, including fluid dynamics, heat and mass transport, structural mechanics, and cell growth, can be characterized and modeled by partial differential equations (PDEs). Most nonlinear PDEs do not have analytical solutions and need to be solved numerically. Traditional discretization-based approaches for solving PDEs can be computationally expensive. To speed up the solution process, neural operators have recently been proposed as an extension of neural networks to learn the infinite-dimensional solution operators of various PDE problems. It has been proven that, with finite-dimensional solutions as training data, neural operators can accurately learn the infinite-dimensional solution space. Once learned, neural operators are mesh-independent, so neural operators trained on coarse grids can generalize to finer grids. 

Frequency-based neural operators, such as Fourier neural operator (FNO) \citep{li2020fourier}, wavelet neural operator (WNO) \citep{tripura2023wavelet}, multiwavelet transform (MWT) \citep{gupta2021multiwavelet}, U-shaped neural operator \citep{rahman2022u}, spectral neural operator \citep{fanaskov2023spectral}, and latent spectral model (LSM) \citep{wu2023solving}, are attractive since the solution space of many PDEs can be naturally expressed in spectral bases. Frequency-based neural operators approximate the PDE solutions by learning how frequencies evolve, and nonlinear terms become convolution in the associated frequency domain. However, the performance of existing frequency-based neural operators may deteriorate in irregular geometries \citep{li2023fourier}, as their associated bases could lose orthogonality and eigenfunction properties in irregular domains \citep{lingsch2023beyond,chen2024learning}. As a result, retaining these important properties for kernels and bases for irregular domains is critical.

Along this line, a recently proposed neural operator solver, latent Mamba operator (LaMO) \citep{tiwari2025latent}, shows great promise in capturing PDE solutions on irregular domains. LaMO integrates the efficiency of state-space models (SSMs) in latent space with the expressive power of kernel integral formulations in neural operators. Although the selective convolution kernels utilized in LaMO can effectively capture PDE solutions on irregular domains, their lack of orthogonality property may lead to spectral mixing. Furthermore, the kernels in LaMO are finite-order linear dynamic filters \citep{gu2023mamba}, which may introduce a low-pass filtering bias, leading to poor recovery of high-frequency and singular features \citep{gu2021efficiently,gu2023mamba}. In the illustrative experiments discussed in Appendix \ref{appen_mex}, we show that LaMO suffers from deviation in the propagation of high-frequency perturbations for 1-D advection PDE, and it fails to capture the singularities in 2-D Darcy flow equation with fractal noise as the permeability field.

Recognizing the fact that LaMO lacks frequency-domain implementation, here we propose Adaptive Mamba Neural Operator (AMO), a novel neural operator architecture that synergizes an adaptive Fourier decomposition with the efficiency of structured SSMs in the frenquency domain \citep{gu2023mamba,parnichkun2024state}. AMO parameterizes the SSM transfer function in a Takenaka-Malmquist (TM) system in a reproducing kernel Hilbert space (RKHS), thus allowing state-free kernel construction and inference directly on the spectrum. The Mamba blocks in AMO serve as rational filters while retaining linear-time selective scanning. Furthermore, it turns out that AMO structure resembles adaptive Fourier decomposition (AFD), a novel signal decomposition technique achieving higher accuracy and significant computational speedup compared to conventional signal decomposition methods \citep{qian2010intrinsic,qian2012adaptive}. The architecture and design of AMO is fully guided by the AFD theory, thereby improving the mathematical explainability and groundness of AMO. 

Overall, our key contributions are summarized as follows:
\begin{enumerate}[leftmargin=*]
    \item AMO is the first neural operator which explicitly incorporates TM systems and Fourier-based methods into the Mamba structure. AMO accurately solves the PDE problems on diverse geometries and effectively handles singularities and long-range dependencies of PDE solutions, Furthermore, we develop theoretical foundations for AMO and prove that AMO performs AFD approximation of PDE solutions. 
    \item The design of every component of AMO is fully guided by the AFD theory, leading to a mathematically interpretable and grounded architecture. Using a TM layer, AMO projects the input into TM systems in a Hardy space, and constructs the reproducing kernels from adaptively selected poles. These adaptive poles serve to construct the reproducing kernels adaptively. We demonstrate the importance of utilizing adaptive poles as opposed to fixed poles and investigate how the number of adaptive poles influences the performance of AMO.
    \item AMO outperforms state-of-the-art neural operator solvers in terms of accuracy across a diverse set of benchmark PDE problems, including plasticity, elasticity, airfoil, pipe flow, Navier-Stokes, and Darcy flow on various geometries. It also achieves outstanding performance in financial applications, such as solving the Black-Scholes equation for the European option pricing problem.
\end{enumerate}

\section{Related work}
\label{rel_w}
\paragraph{Frequency-based neural operators.} Early advancements in operator learning exploited spectral decompositions to encode global information efficiently. A notable example is FNO \citep{li2020fourier}, which parameterizes integral kernels in the Fourier domain to enable resolution‐invariance. However, FNO does not generalize well to irregular geometries \citep{li2020fourier}. Later, Geo-FNO \citep{li2023fourier} was proposed to solve PDEs on general geometries. U-FNO \citep{wen2022u} introduced architectural modifications to better capture localized details while maintaining FNO's global properties. Meanwhile, F-FNO \citep{tran2021factorized} generalizes the FNO architecture for more efficient spectral layers and deeper architectures. On the other hand, neural operators based on the wavelet transform include WNO \citep{tripura2023wavelet}, MWT \citep{gupta2021multiwavelet}, Pad{\'e} \citep{gupta2022non}, and CMWNO \citep{xiao2023coupled}. Fourier and wavelet transforms are both special cases of spectral decomposition, and neural operators based on spectral decomposition has recently been proposed  \citep{fanaskov2023spectral}.

\paragraph{Attention-based neural operators.} Attention mechanisms have been widely studied in neural operator domain. Some of the notable works include orthogonal attention \citep{xiao2023improved}, physics-cross-attention \citep{wang2024latent}, and nonlocal attention \citep{yu2024nonlocal}. The Transformer structure is also a promising building block for neural operators. Some of the related works include OFormer \citep{li2022transformer}, LSM \citep{wu2023solving}, and Transolver \citep{wu2024transolver}. However, Transformers struggle to capture kernel integral transforms efficiently in complex, high-dimensional continuous PDEs \citep{guibas2021adaptive}. 

\paragraph{SSM-based neural operators.} To address the computational inefficiency of Transformer-based neural operators, SSM and Mamba emerge as promising architectures for neural operator designs \citep{tiwari2025latent}. Previous studies of SSM-based neural operators \citep{zheng2024alias,cheng2024mamba,hu2024state,tiwari2025latent} have been applied to nonlinear PDEs on irregular geometries and dynamical systems. These works incorporate traditional SSMs with different scan strategies without considering the information in the frequency domain. On the other hand, our AMO considers the frequency information via its explicit kernel and SSMs from a transfer function perspective \citep{parnichkun2024state}.

\section{Adaptive Fourier Mamba operator}
\label{AMO}
\subsection{Problem Statement}
We frame our task as learning a solution operator for a family of parametric PDEs. In general, consider a PDE defined on a spatial domain $\Omega \subset \mathbb{R}^d$ and a time interval $(0, T]$:
\begin{equation}\label{eqn_problem}
    \mathcal{L}_a [u(x,t)] = f(x,t), \quad \forall (x,t) \in D \times (0,T],
\end{equation}
which is subject to a set of initial and boundary conditions. Here, the parameter function $a\in \mathcal{A}$ specifies the coefficients and initial and boundary conditions of Equation \ref{eqn_problem}. In operator learning, our goal is to construct an accurate approximation for $\mathcal{G}: \mathcal{A} \rightarrow \mathcal{F}(D \times [0,T])$, which maps the parameter function $a$ to the corresponding solution function $u(x,t)\in \mathcal{F}$, via a parametric mapping $\mathcal{G}_\theta$. The aim is to learn $\theta$ such that $\mathcal{G}_\theta \approx \mathcal{G}$ from a set of training data $\{(a_j,u_j)\}_{j}$.

\subsection{AMO Architecture}
AMO is a novel neural operator architecture that synergizes the mathematical groundness of AFD theory with the efficiency of structured SSMs in the frenquency domain \citep{gu2023mamba,parnichkun2024state}. Different from LaMO \citep{tiwari2025latent}, which compresses the physical tokens into a fixed-size latent representation, AMO utilizes a multi-layer fully-connected feedforward neural network (MLP) to first map the encoded tokens to their counterparts on the reproducing kernel Hilbert space (RKHS), and then iteratively refine them by a series of processing blocks. Each block uniquely integrates two components: (i) a TM layer containing global spectral transform via data-dependent TM bases, and (ii) a bidirectional SSM \citep{gu2021efficiently,gu2023mamba} parameterized by transfer functions in the frequency domain \citep{parnichkun2024state} to efficiently capture long-range dependencies within the RKHS.

\paragraph{Neural architecture.} Given the parameter function (input) $a$, the output of AMO, denoted as $\hat{u}_{N,\theta}$, is:
\begin{equation}
   \hat{u}_{N,\theta}= \mathcal{G}_\theta(a) =\left( \mathcal{Q} \circ \mathcal{S}^N \circ \mathcal{L}^N \circ \dots \circ \mathcal{S}^1 \circ \mathcal{L}^1 \circ \mathcal{R}\circ \mathcal{P}\right)(a),
\end{equation}
where $\circ$ is the function composition, $N$ is the number of processing blocks, $\mathcal{P}$ is the lifting operator which encodes into a lower-dimensional space (maps the input to the first latent representation $\mathbf{z}_0$) \citep{tiwari2025latent,li2020fourier}, $\mathcal{Q}$ is the corresponding projection operator mapping the lower-dimensional space back to the original space (maps the final latent representation $\mathbf{z}_{N+1}$ to the output) \citep{tiwari2025latent,li2020fourier}, $\mathcal{R}$ is a multi-layer neural network mapping the physical token to an RKHS, $\mathcal{L}^{i}=\mathrm{SSM}^i\circ \mathrm{TM}^i$ ($i=1,\ldots,N$) is the processing block of AMO (which consists of a TM layer and a bidirectional SSM), and $\mathcal{S}^i$ ($i=1,\ldots,N$) are aggregation layers with skip connections. These aggregation layers not only receive the final output from the layer sequence but also have access to the intermediate outputs from each of the preceding layers.

\paragraph{The lifting operator,} $\mathcal{P}$, projects the $N_s$ physical token inputs into a compressed set of $M$ encoded tokens, where $M \ll N_s$. This projection is achieved via a cross-attention mechanism. A learnable query array, $\mathbf{L} \in \mathbb{R}^{M \times D_{\text{embed}}}$, acts as the query. The key and value pairs are constructed by combining a linear projection of the input features $\mathbf{x}_{\text{phys}}$ with a positional embedding of their coordinates $\mathbf{g}_{\text{phys}}$ generated by a positional encoding network $\mathrm{PEN}$. Here, $\mathbf{x}_{\text{phys}}\in\mathbb{R}^{N_s\times D_{\text{in}}}$ stacks the feature vectors $\{\mathbf{x}_i\}_{i=1}^{N_s}$ and $\mathbf{g}_{\text{phys}}\in\mathbb{R}^{N_s\times d}$ stacks the coordinates $\{\mathbf{g}_i\}_{i=1}^{N_s}$, and the physical token is essentially pair $(\mathbf{g}_i,\mathbf{x}_i)$.  The process for generating the initial representation $\mathbf{z}_0$ is formally defined as:
\begin{equation}
\begin{aligned}
\mathbf{kv} &= \mathrm{Linear}(\mathbf{x}_{\text{phys}}) + \mathrm{PEN}(\mathbf{g}_{\text{phys}}), \\
\mathbf{z}'_0 &= \mathrm{CrossAttn}(\text{query}=\mathbf{L}, \text{key}=\mathbf{kv}, \text{value}=\mathbf{kv}), \\
\mathbf{z}_0 &= \mathbf{z}'_0 + \mathrm{FFN}(\mathbf{z}'_0),
\end{aligned}
\end{equation}
where the output of the cross-attention module is processed through a residual connection and a standard feed-forward network $\mathrm{FFN}$.

\paragraph{The mapping operator,} denoted by $\mathcal{R}$, acts on the encoded representation produced by the lifting operator $\mathcal{P}$, which transforms this discrete encoded tokens into a representation within a continuous function space. Let $\mathbf{z}_0 \in \mathbb{R}^{M \times D_{\text{embed}}}$ be the set of encoded tokens generated by $\mathcal{P}$, the operator $\mathcal{R}: \mathbb{R}^{M \times D_{\text{embed}}} \to \mathcal{H}$ maps this representation to its counterpart in an RKHS $\mathcal{H}$. This mapping is typically implemented as a multi-layer fully-connected feedforward network $\mathrm{MLP}$, which processes each token independently as:
\begin{equation}
\mathbf{z}_1 = \mathcal{R}(\mathbf{z}_0) = \mathrm{MLP}(\mathbf{z}_0),  
\end{equation}
where $\mathbf{z}_1$ denotes the projected tokens in the RKHS. We remark that, the mapping operator $\mathcal{R}$ maps the encoded tokens $\mathbf{z}_0$ to the new tokens $\mathbf{z}_1$ in $\mathcal{H}$ without knowing the physical information $\mathbf{x}_{\text{phys}}$ and $\mathbf{g}_{\text{phys}}$.

\paragraph{The TM layer,} denoted by $\mathrm{TM}^i$ ($i=1,\ldots,N$), performs a global convolution via a spectral transform, where the reproducing kernels and TM bases are constructed from data-dependent poles. To define the reproducing kernels, we parameterize a small $\mathrm{MLP}$ to predict a set of $i$ complex values called ``poles'' $\{a_k\}_{k=1}^i$ (denoted as $a_{1:i}$) located in the unit disk $\mathbb{D} = \{z\in\mathbb{C}: |z|<1\}$ from tokens $\mathbf{z}_{i}$. Once we have the set of poles, we can explicitly define the reproducing kernel $K_a(z)$ as:
\begin{equation}\label{eqn_ker}
    K_a(z) = \frac{1}{1-\overline{a}z},
\end{equation}
where $z\in \mathcal{H}$ and $a$ is a single pole satisfying $|a|<1$. Intuitively, we remark that each pole can be viewed as a ``tuning knob'' that selects a particular spatial pattern in the solution, with its location in the complex plane controlling how localized that pattern is. Adaptive poles allow AMO to survey more heavily in regions where the parameters change rapidly, while using fewer poles in smooth regions. Across layers, the poles evolve from broad, coarse patterns in early layers to more refined, problem-specific patterns in deeper layers.

To generalize on irregular geometries, the kernels in Equation \ref{eqn_ker} need to be modified to become orthonormal. These modified kernels are also known as the TM bases due to their deep connection to TM systems. The first basis, denoted as $\mathscr{B}_1$, is simply the normalized kernel of Equation \ref{eqn_ker} with pole $a_1$ as $\mathscr{B}_1(z;a_1) = \frac{\sqrt{1-|a_1|^2}}{1-\overline{a_1}z}$. Then, we start with $\frac{\sqrt{1-|a_2|^2}}{1-\overline{a_2}z}$, but it is not orthogonal to $\mathscr{B}_1$. We reach the orthogonality by subtracting its projection onto $\mathscr{B}_1$, and we get $\mathscr{B}_2(z;a_{1:2}) = \frac{\sqrt{1-|a_2|^2}}{1-\overline{a_2}z} \left( \frac{z-a_1}{1-\overline{a_1}z} \right)$ after normalization. This way, the bases $\mathscr{B}_i$ are finally formulated as:
\begin{equation}\label{eqn_tm}
\mathscr{B}_i(z;a_{1:i})=\frac{\sqrt{1-|a_i|^2}}{1-\overline{a_i}z}\prod_{j=1}^{i-1}\frac{z-a_j}{1-\overline{a_j}z},
\end{equation}
where $z\in\mathcal{H}$ and $a_{1:i}$ are poles learned by the small MLP satisfying $|a_k|<1$ for $k=1,\ldots,i$.
Overall, the $i$-th TM layer $\mathrm{TM}^i$ applies a small MLP $\mathbf{z_i}\mapsto a_{1:i}$, and then construct the TM bases $\mathscr{B}_i$ according to \ref{eqn_tm}. We remark that, the tokens $\mathbf{z}_i$ will be kept as the input of $\mathrm{SSM}^i$ along with the TM bases $\mathscr{B}_i$.

\paragraph{Bidirectional SSM block} is effective in solving PDEs on irregular geometries \citep{tiwari2025latent} and employs inherent kernel integrals. However, this inherent kernel does not contain information in the frequency domain, thereby falling short in capturing high-frequency and singular features. To address this limitation, we utilize the transfer function in training SSMs in the frequency domain \citep{parnichkun2024state}. The SSM block $\mathrm{SSM}^i$ generates the spectrum of output in the frequency domain $Y_i(e^{i\omega})$ as the product of the spectrum of input $Z(e^{i\omega})$ and the transfer function $H_i(e^{i\omega})$, i.e., $Z(e^{i\omega})H_i(e^{i\omega})$. We point out that the output is essentially the coefficient of discrete AFD operation with the form $\langle \mathbf{z_i},\mathscr{B}_i\rangle$ \citep{qian2010intrinsic,qian2011algorithm}, where the inner product is defined as $\langle x,f\rangle=\frac{1}{\Tilde{N}}\sum_{n=0}^{\Tilde{N}-1} x[n]\overline{f(e^{i2\pi n/\Tilde{N}})}$. Here, $\Tilde{N}$ denotes the length of signal $x=\{x[n]\}_{n=0}^{\Tilde{N}-1}$.

Let us consider the impulse response $h_i$ of SSM block $\mathrm{SSM}^i$ (linear time-invariant system) as: 
\begin{equation}
h_i[n]=\frac{1}{2\pi}\!\int_{0}^{2\pi}\overline{\mathscr{B}_i\left(e^{i\omega};a_{1:i}\right)}e^{i\omega n}\,d\omega.
\end{equation}

Then, the corresponding transfer function $H_i$ can be obtained as:
\begin{equation}\label{eqn_tran}
    H_i(e^{i\omega})=\overline{\mathscr{B}_i\left(e^{i\omega};a_{1:i}\right)}.
\end{equation}

By setting the transfer function of SSM to be Equation \ref{eqn_tran}, the SSM block computes a correlation of the input $\mathbf{z}_i$ and $\mathscr{B}_i$:
\begin{equation}\label{eqn_fre}
    Y_i(e^{i\omega})= H_i(e^{i\omega})X(e^{i\omega})=\overline{\mathscr{B}_i(e^{i\omega};a_{1:i})}\,X(e^{i\omega})
\end{equation}
in the frequency domain. In the time domain, Equation \ref{eqn_fre} leads to the update of $\mathbf{z_i}$:
\begin{equation}
\hat{\mathbf{z}}_\mathbf{{i+1}}[\ell]=(h_i * \mathbf{z_i})[\ell]=\sum_{n=0}^{M-1} \mathbf{z_i}[n]\overline{\mathscr{B}_i\big(e^{i2\pi(n-\ell)/M};a_{1:i}\big)},
\end{equation}
where $\ell$ denotes the time shift in the correlation operations. The zero-lag sample gives the final output:
\begin{equation}
\hat{\mathbf{z}}_\mathbf{{i+1}}[0]=(h_i * \mathbf{z_i})[0]=\sum_{n=0}^{M-1} \mathbf{z_i}[n]\overline{\mathscr{B}_i\big(e^{i2\pi n/M};a_{1:i}\big)}=\langle\mathbf{z_i},\mathscr{B}_i\rangle.
\end{equation}

\paragraph{Aggregation layers} $\mathcal{S}^i$ has $N$ neural layers and combines the skip connection $\mathbf{z_{i}}$ with the intermediate outputs $\hat{\mathbf{z}}_\mathbf{{i+1}}[0] = \mathcal{L}^i(\mathbf{z_{i}})$ and $\mathscr{B}_i = \mathrm{TM}^i(\mathbf{z_{i}})$:
\begin{equation}\label{eqn_aggre}
\begin{aligned}
    &\mathbf{z_{2}}=\mathcal{S}^i(\mathbf{z_{1}}, \hat{\mathbf{z}}_\mathbf{{2}}[0], \mathscr{B}_1) =  \hat{\mathbf{z}}_\mathbf{{2}}[0] \odot \mathscr{B}_1 \quad \text{for } i = 1,\\
    &\mathbf{z_{i+1}}=\mathcal{S}^i(\mathbf{z_{i}}, \hat{\mathbf{z}}_\mathbf{{i+1}}[0], \mathscr{B}_i) =\mathbf{z_{i}} + ( \hat{\mathbf{z}}_\mathbf{{i+1}}[0] \odot \mathscr{B}_i) \quad \text{for } i > 1,
    \end{aligned}
\end{equation}
where $\odot$ denotes the element-wise (Hadamard) product. 

\paragraph{Output.} Finally, the output of $\hat{u}_{N,\theta}$ is the projection of $\mathbf{z_{N+1}}$ by the local transformation $\mathcal{Q}$ as \citep{li2020fourier}:
\begin{equation}\label{eqn_output}
\begin{aligned}
    \hat{u}_{N,\theta}&=\mathcal{Q}\left(\sum_{i=1}^{N+1}\left( \sum_{n=0}^{M-1} \mathbf{z_i}[n]\overline{\mathscr{B}_i\big(e^{i2\pi n/M};a_{1:i}\big)}\right)\odot\mathscr{B}_i\right).\\
\end{aligned}
\end{equation}

\section{Properties of AMO}

\paragraph{Connections to AFD theory.} Adaptive Fourier decomposition (AFD) is a novel signal decomposition technique that leverages the Takenaka-Malmquist system and adaptive orthogonal bases \citep{qian2010intrinsic,qian2012adaptive}. It admits a proved convergence of any signal $s\in\mathcal{H}$ such that $s=\sum_{i=1}^\infty\langle s,\mathscr{B}_i\rangle\mathscr{B}_i$ \citep{qian2011algorithm,wang2022adaptive} for the chosen orthonormal bases $\mathscr{B}_i$ \citep{saitoh2016theory}. Thus, the output of Equation \ref{eqn_aggre} $\mathbf{z_{i+1}}$, is equivalent to the AFD operation, i.e., $\mathbf{z_{i+1}} = \sum_{k=1}^{i} \langle\mathbf{z_k},\mathscr{B}_k\rangle\mathscr{B}_k$. Furthermore, the output in Equation \ref{eqn_output} can be approximated as $\hat{u}_{N,\theta}=\mathcal{Q}\left( \sum_{i=1}^{N+1} \langle\mathbf{z_i},\mathscr{B}_i\rangle\mathscr{B}_i\right)\approx\sum_{i=1}^{N+1} \langle \hat{u}_{i-1,\theta},\mathscr{B}_i\rangle\mathscr{B}_i$, where $\hat{u}_{i-1,\theta}=\mathcal{Q}(\mathbf{z_i})$. This is also equivalent to the AFD operation. Thus, several theoretical properties of AMO, including convergence and error bound (see theorems and proofs in Appendix \ref{appen_theore}), can be guaranteed with efficiently large layers, thanks to AMO's deep connections with AFD theory.

\paragraph{Connections to \citet{parnichkun2024state}.} 
\citet{parnichkun2024state} proposed a state-free inference of SSMs by learning the coefficients of the rational transfer function $H$ instead of the traditional state-space matrices $A,B$, and $C$ \citep{gu2023mamba}, which is called rational transfer function (RTF) approach. Specifically, the RTF learns $H$ as:
\begin{equation}\label{eqn_rtf}
  H(z)=
  h_0 +
  \frac{b_1 z^{-1} + b_2 z^{-2} + \cdots + b_n z^{-n}}
       {1 + a_1 z^{-1} + a_2 z^{-2} + \cdots + a_n z^{-n}}\,,
\end{equation}
where $a_i$, $b_i$, and $h_0$ are denominator coefficients, numerator coefficients, and feedthrough term, respectively. When it comes to AMO, we push the formulation of transfer function in Equation \ref{eqn_tran} and learn the rational transfer function by learning the poles $a_{1:n}$ (for $n$ terms). In Appendix \ref{appen_trans}, we show that our way of learning poles leads to a similar form of Equation \ref{eqn_rtf} with $n$ learned parameters (poles) as opposed to learning $2n+1$ parameters in RTF. 

\paragraph{Computational complexity.} In terms of computational complexity, AMO has an overall computational complexity of $\mathcal{O}\!\big(N(M\log M+MD)\big)+\mathcal{O}(N_sMD)$. The former is from the processing block, whereas the latter comes from $\mathcal{P}$ and $\mathcal{Q}$. When $M$ is treated as a constant with $M\ll N_s$ and a local decoder is used, the dominant cost reduces to $\mathcal{O}(N_sD)+\mathcal{O}(N\,M\log M)$. Consequently, the complexity grows linearly with the number of mesh points $N_s$. With mesh size fixed, it is approximately linear in the number of latent tokens $M$ and the number of blocks $N$.

\section{Numerical Experiments}
To illustrate the effectiveness of AMO, we conduct numerical experiments with multiple baseline neural operators on diverse datasets including three categories: (i) regular grids: 2-D Darcy flow equation and 2-D Navier-Stokes equation \citep{li2020fourier}, (ii) irregular geometries: plasticity, airfoil, pipe, and elasticity \citep{li2023fourier}, (iii) PDEs with singularities: European option pricing under the Black-Scholes equation, and 3-D Brusselator (reaction-diffusion) equation from \citet{cao2024laplace} (see Appendix \ref{appen_mex}).

\paragraph{Metric.} In the training and evaluation stage, we utilize relative $L^2$ error as the metric for accuracy for all problems:
\begin{equation}
    \text{Rel-}L^2=\frac{1}{\mathcal{N}}\sum_{i=1}^\mathcal{N}\frac{||\mathcal{G}_{\theta}(a_i)-\mathcal{G}(a_i)||_{L^2}}{||\mathcal{G}(a_i)|_{L^2}},
\end{equation}
where $\mathcal{N}$ denotes the number of samples. We also consider training time, the number of parameters, and/or GPU memory usage as metrics for computational efficiency. 

\paragraph{Implementation details.} For baselines, we follow the implementation settings of their works. Note that the architecture of FNO \citep{li2020fourier} has been updated after publication, we evaluate FNO using the newest architecture. For AMO, we train $500$ epochs on all datasets. We use AdamW optimizer with decoupled weight decay $1\times10^{-5}$, base learning rate $2\times10^{-4}$, and a cosine decay schedule \citep{loshchilov2017decoupled} with a linear warm-up over the first $10\%$ of total steps. The nonlinearity is GELU inside the processing blocks. We clip global grad-norm at $0.5$ each step. Unless stated otherwise, we use batch size $16$, latent width $128$, $64$ latent tokens, $32$ adaptive poles, and 4 processing blocks with SSM state size $16$, depthwise 1-D convolution (per channel) of kernel size 4, channel expansion ratio 2. Experiments are conducted on a Linux workstation running Ubuntu (kernel 6.14, glibc 2.39) with Python 3.13.5 (Anaconda), PyTorch 2.8.0+cu129 (CUDA 12.9), an AMD Ryzen 9 9950X (16-core) processor, and a single NVIDIA GeForce RTX 4090 (48 GB) GPU. CUDA is enabled.

\subsection{Numerical results of benchmark datasets}

Table \ref{tab_operator-bench} shows the comprehensive comparison with various baselines on the six benchmark problems. Among those problems, N-S and Darcy flow datasets apply regular grids, elasticity dataset uses point clouds, whereas others are generated under structured meshes \citep{li2020fourier,li2023fourier}. AMO consistently outperforms existing SOTA models by an average improvement of $28.42\%$. In particular, for airfoil, Darcy, and N-S datasets, the relative $L^2$ error decreased more than $30\%$ compared to the existing SOTA models, demonstrating the superior performance of AMO compared to existing frequency-, transformer-, and Mamba-based models when solving complex dynamics and handling irregular geometries. To solve the complex dynamics, \citet{tiwari2025latent} incorporates latent representations and SSMs, which can be considered as integral kernels without orthogonality. Meanwhile, ONO \citep{xiao2023improved} uses an orthogonal attention to ensure orthogonality. Numerical results on irregular geometries, including elasticity ($0.0050\to0.0043$), plasticity ($0.0007\to0.0006$), airfoil ($0.0041\to0.0020$), and pipe ($0.0026\to0.0023$), show that the systematic integration of orthonormal kernels and SSMs leads to an exact AFD approximation and in turn improves PDE solution accuracy in irregular geometries.  

\begin{table}[htp]
\centering
\caption{Relative $L^2$ error comparisons of AMO with baselines across six benchmark datasets. Lower relative $L^2$ error is better. We quantify the improvement as the gain of AMO relative to the $L^2$ error of the second best model. \textbf{Bold} means the best model, \underline{underline} means the second best model, \textcolor{red}{red} means the third best model, and \textcolor{blue}{blue} means the fourth best model.}
\label{tab_operator-bench}
\begin{adjustbox}{width=\columnwidth}
\begin{tabular}{l|cccccc}
\toprule
\textbf{Models} & \textbf{Elasticity} & \textbf{Plasticity} & \textbf{Airfoil} & \textbf{Pipe} & \textbf{N-S} & \textbf{Darcy} \\
\midrule
FNO \citep{li2020fourier} & 0.0229 & 0.0074 &0.0138 &0.0067&\underline{0.0417}& \textcolor{blue}{0.0052}\\
U-FNO \citep{wen2022u} &  0.0239 &  0.0039 &  0.0269 & 0.0056 & 0.2231& 0.0183\\
F-FNO \citep{tran2021factorized} & 0.0263 & 0.0047 &  0.0078 &  0.0070 &0.2322 &0.0077\\
LNO \citep{wang2024latent} &  \textcolor{red}{0.0052} & 0.0029 & \textcolor{red}{0.0051} &\underline{0.0026} & \textcolor{blue}{0.0845} & \textcolor{red}{0.0049}\\
ONO \citep{xiao2023improved}& 0.0118 &  0.0048 & 0.0061 & 0.0052 & 0.1195 & 0.0076\\
WMT \citep{gupta2021multiwavelet} & 0.0359 & 0.0076 &  0.0075 &  0.0077 & 0.1541 & 0.0082\\
Galerkin \citep{cao2021choose} & 0.0240 & 0.0120 & 0.0118 & 0.0098 & 0.1401 & 0.0084\\
LSM \citep{wu2023solving}&  0.0218 & 0.0025 & 0.0059 & 0.0050 & 0.1535 & 0.0065\\
OFormer \citep{li2022transformer}& 0.0183 & 0.0017 & 0.0183 & 0.0168& 0.1705&0.0124\\
Transolver \citep{wu2024transolver}   & \textcolor{blue}{0.0062} & \textcolor{red}{0.0013} & 0.0053 & 0.0047 & 0.0879 & 0.0059 \\
Transolver++ \citep{luo2025transolver++} & 0.0064 & \textcolor{blue}{0.0014} & \textcolor{red}{0.0051} & \textcolor{blue}{0.0027} & 0.1010 & 0.0089 \\
LAMO \citep{tiwari2025latent}  & \underline{0.0050} & \underline{0.0007} & \underline{0.0041} & \textcolor{red}{0.0038} & \textcolor{red}{0.0460} & \underline{
0.0039} \\
\textbf{AMO (ours)} & \textbf{0.0043} & \textbf{0.0006} & \textbf{0.0020} & \textbf{0.0023} & \textbf{0.0278} & \textbf{0.0021} \\
\midrule
\textbf{Improvement} & 14.0\% & 14.3\% & 51.2\% & 11.5\% & 33.3\% & 46.2\% \\
\bottomrule
\end{tabular}
\end{adjustbox}
\end{table}

\paragraph{Computational Efficiency.} To explore the computational efficiency of AMO, we focus on Darcy and airfoil problems. On average, AMO reaches $46.2\%$ and $51.2\%$ reduction in training time over SOTA models in these two problems, as shown in Figure \ref{fig_efficiency}. With light architectures and small GPU memory, AMO achieves the best training speed. Compared to the SOTA neural operator, LaMO \citep{tiwari2025latent}, AMO is $\sim1.2\times$ faster and $\sim2.5\times$ lighter with similar GPU memory. Instead of using orthogonal attention as in ONO \citep{xiao2023improved}, AMO employs bases in the orthogonal form (Equation \ref{eqn_tm}), which does not require an orthogonalization process, thereby saving $\sim2.7\times$ in training time and $\sim3\times$ in GPU memory compared to ONO.

\paragraph{Scalability.} We examine the computational scalability of AMO on 2-D Darcy flow problem. From Table \ref{tab_scalability}, we observe that, as the grid dimension changes from 64 to 128 ($N_s$ becomes 4 times larger), both training and inference times increase approximately linearly (by about 4 times), which aligns with the computational complexity result mentioned earlier. The memory usage remains relatively constant with only a slight increase. This reflects the architectural characteristics of AMO, where the main computations (SSM blocks) are performed on $M$ latent tokens rather than on $N_s$ physical points, and thus the memory footprint is largely decoupled from the input resolution $N_s$.
\begin{table}[htp]
\centering
\caption{AMO is computationally scalable with respect to input resolution $N_s$.}
\label{tab_scalability}
\begin{adjustbox}{width=\columnwidth}
\begin{tabular}{cccccc}
\toprule
\textbf{Grid dimensions} & \textbf{Grid size $N_s$} & \textbf{Training time (sec/epoch)} & \textbf{Inference time (sec/epoch)} & \textbf{GPU memory (GB)} \\
\midrule
$64\times64$ & $ 4096$ & $14.0$  & $0.007$ & $2.3$ \\
$128\times128$ & $ 16384 $ & $52.5$  & $0.28$ &$2.4$\\
$256\times256$ & $ 65536$ & $205.0$  & $1.12$ & $2.7$\\
\bottomrule
\end{tabular}
\end{adjustbox}
\end{table}

\paragraph{Learned pole distributions across layers.} To understand how the adaptive poles are selected and evolved, Figures \ref{fig:2dpole} and \ref{fig:3dpole} showcase the distributions per layer for 2-D Darcy flow and 3-D Brusselator equations. The learned poles of AMO on Darcy flow problem tend to approach to the boundary of the unit disk, while those on the Brusselator problem tend to be in the interior of the unit disk. The reason is that,  the challenging characteristics and singularities of the Darcy flow problem are located at the boundaries, and then more adaptive poles would be put there. Meanwhile, the complexity of the Brusselator problem does not come from the boundaries. It comes from the local, non-linear reaction that happens at every single point inside the domain. Therefore, most of the learned poles should be put inside the unit disk.

\subsection{European Options Pricing}
To demonstrate the versatility of AMO in solving different PDEs in different contexts, we consider the European calls/puts problem modeled using the Black–Scholes equation with continuous dividend yield $q$. For contract/market parameters $(r,\sigma,q,K,T,\texttt{is\_call})$, the price $V(S,t)$ satisfies the Black–Scholes equation \citep{barles1998option}:
\begin{equation}
\partial_t V+\tfrac12\sigma^2 S^2\,\partial_{SS}V+(r-q)S\,\partial_S V-rV=0,\quad S\in[S_{\min},S_{\max}],\ t\in[0,T],
\end{equation}
with terminal payoff $V(S,T)=\max(\pm(S-K),0)$ ($+$ sign for calls, $-$ for puts) and the linear boundary conditions $V(0,t)=0$ for calls, $V(0,t)=K e^{-r(T-t)}$ for puts, and controlled growth as $S\to\infty$. This problem setting leads to two singular features: (i) the terminal payoff kink at $S=K$ (jump in $\partial_S V$, concentration in $\partial_{SS}V$) as $t_{\text{norm}}\!\uparrow 1$; and (ii) degeneracy near small $S$ as a result of the $S^2\partial_{SS}V$ diffusion term. Our goal is to learn the operator that maps the parameters $(r,\sigma,q,K,T,\texttt{is\_call})$ to the price $V(S,t)$. By comparing AMO with a set of top-performing solvers, we observe from Table \ref{tab_euro_option_resources} that average improvements of $25\%$, $4.1\%$, and $52.7\%$ have been achieved by AMO in terms of relative $L^2$ error, training time, and parameter counts, respectively. This indicates that AMO can accurately and efficiently solve PDE problems with singular features.

\begin{figure}[ht!]
    \centering
    \includegraphics[width=\linewidth]{fig_efficiency.png}
    \caption{Comparisons of training time per epoch, number of parameters, and GPU memory among existing SOTA models on (a) Darcy and (b) airfoil, where AMO exhibits the strongest incremental gains.}
    \label{fig_efficiency}
\end{figure}
\begin{table}[htp]
\centering
\caption{European option pricing: relative $L^2$ error and resource profile. Lower is better for error, GPU memory, and training time. Parameter counts shown in millions. \textbf{Bold} = best, \underline{underline} = second best, and \textcolor{red}{red} = third best.}
\label{tab_euro_option_resources}

\begin{adjustbox}{width=\columnwidth}
\begin{tabular}{l|c|c|c}
\toprule
\textbf{Models}
& \textbf{Rel.\ $L^2$\,($\downarrow$)}
& \textbf{Training Time (sec/epoch, $\downarrow$)}
& \textbf{Params (M, $\downarrow$)} \\
\midrule
FNO \citep{li2020fourier}           & 0.0016 &  25.1 & 3.78 \\
LNO \citep{wang2024latent}          & \textcolor{red}{0.0010} &  \underline{21.7} & \underline{2.56} \\
Transolver \citep{wu2024transolver} & 0.0012 &  \textcolor{red}{22.3} & 5.91 \\
LAMO \citep{tiwari2025latent}       & \underline{0.0008} &  22.5 & \textcolor{red}{3.52} \\
\textbf{AMO (ours)}                & \textbf{0.0006} &  \textbf{20.8} & \textbf{1.21} \\
\bottomrule
\end{tabular}
\end{adjustbox}
\end{table}

\subsection{Ablation studies}
\paragraph{Adaptive kernels vs. static kernels.} We now consider the need and benefits of using adaptive kernels. A kernel is adaptive when its parameterization (e.g., coefficients) varies with the input. In this work, the formulation of Equation \ref{eqn_tm} varies with the learned poles $a_{1:i}$ and thus is an adaptive kernel. We also randomly fix the value of $a_{1:i}$ for static kernels for comparison. Furthermore, although a total of $i$ poles are needed for $i$-th processing block, one can still identify more poles and select the best $i$ poles for implementation. Table \ref{tab_fixed_adaptive_bench} shows the relative $L^2$ error results across six benchmark datasets and the European options (EO) dataset. We find that, using adaptive kernels, the relative $L^2$ errors reduce significantly compared to using static poles for all benchmark problems considered. In fact, the relative $L^2$ errors when selecting only 4 poles are lower than those when selecting 32 static poles.
\begin{table}[htp]
\centering
\caption{Relative $L^2$ error comparisons for \textbf{Static} vs.\ \textbf{Adaptive} kernels across seven benchmarks. Lower is better.}
\label{tab_fixed_adaptive_bench}
\begin{adjustbox}{width=\columnwidth}
\begin{tabular}{l |c | ccccccc}
\toprule
\textbf{Models} & \textbf{Number of poles} & \textbf{Elasticity} & \textbf{Plasticity} & \textbf{Airfoil} & \textbf{Pipe} & \textbf{N-S} & \textbf{Darcy} & \textbf{EO} \\
\midrule
AMO (static)  & 32 & 0.0097 & 0.0021 & 0.0067 & 0.0072 & 0.1103 & 0.0174 & 0.0035\\
\midrule
\multirow{7}{*}{AMO (adaptive)}
 & 4   & 0.0056 & 0.0012 & 0.0033 & 0.0029 & 0.0311 & 0.0057 & 0.0014\\
 & 6   & 0.0051 & 0.0010 & 0.0031 & 0.0027 & 0.0298 & 0.0047 & 0.0010\\
 & 8   & 0.0049 & 0.0008 & 0.0027 & 0.0025 & 0.0281 & 0.0036 & 0.0009\\
 & 16  & 0.0046 & 0.0008 & 0.0023& 0.0028 & 0.0290 & 0.0029 & 0.0008\\
 & 32  &\textbf{0.0043} & \textbf{0.0006} & \textbf{0.0020} & \textbf{0.0023} & \textbf{0.0278} & \textbf{0.0021} & \textbf{0.0006}\\
 & 64  & 0.0048 & 0.0007 & 0.0036 & 0.0031 & 0.0372 & 0.0046 & 0.0009\\
\bottomrule
\end{tabular}
\end{adjustbox}
\end{table}

\paragraph{Need for ensuring orthogonality.} To understand how orthogonal kernels affect AMO performance, we conduct another ablation study by using non-orthogonal kernels (i.e., Equation \ref{eqn_ker}) in the AMO framework. In this case, the transfer functions used in SSMs are $H_i(e^{i\omega})= \overline{(1 - |a_i|^2)\sum_{n=0}^\infty (\overline{a_i})^n e^{in\omega}}$ to match the output of AFD operation. Without orthogonality, AMO experiences higher relative $L^2$ error, especially for problems with irregular geometries (e.g., airfoil $0.0020\to0.0083$ and elasticity $0.0043\to0.0094$). At the same time, the training time also increases by $\sim50.3\%$ per epoch on average across all six benchmark datasets. This shows that the use of orthogonal kernels (i.e., TM systems) helps improve both accuracy and computational efficiency of AMO solver.

\paragraph{Choice of SSMs.} Finally, we evaluate the choice of bidirectional SSMs in AMO compared to unidirectional SSMs and multidirectional SSMs. Results in Figure \ref{fig_ssm} indicate that the choice of bidirectional SSMs in AMO consistently outperforms other two SSMs in all datasets. 

\begin{figure}[ht!]
    \centering
    \includegraphics[width=0.9\linewidth]{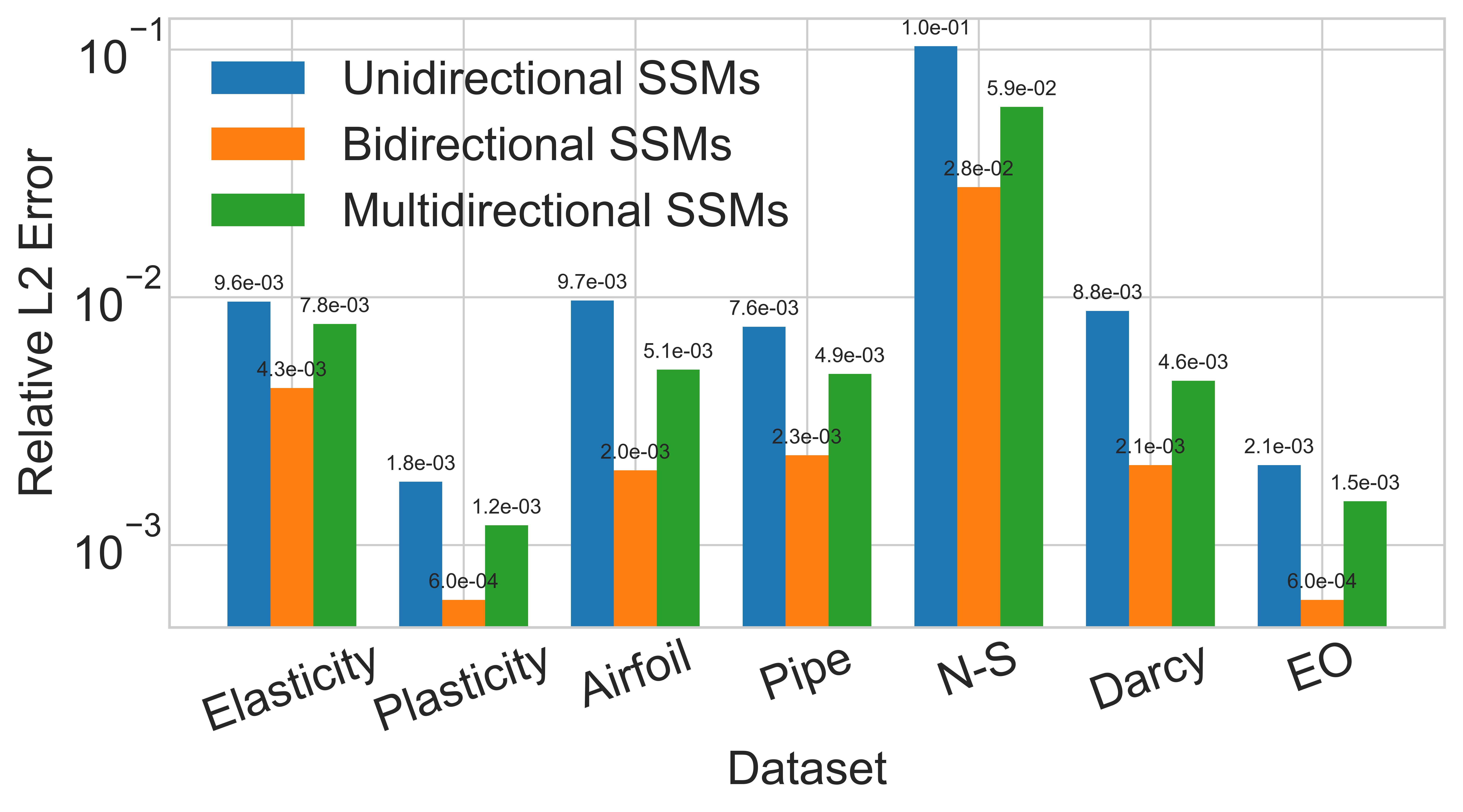}
    \caption{Contribution of three SSMs across seven benchmark datasets. Note that we do not apply weights shared for all experiments. Lower is better.}
    \label{fig_ssm}
\end{figure}

\subsection{Experiment using real-world noisy dataset}
To validate AMO's performance on noisy real-world datasets, we perform experiments using the latex glove DIC (Digital Image Correlation) original dataset \citep{ifno}. The goal is to learn the mechanical response of a nitrile glove sample directly from experimental data, without assuming a known constitutive law. The goal is to predict the displacement field at the current loading step. The input includes the spatial coordinates, the displacement field from the previous step, and the current boundary displacement. We compare the performance of AMO to the current SOTA of this dataset, IFNO, as well as FNO as follows. To ensure fair comparison, we conduct experiments using the same settings as IFNO with the number of hidden layers ranging from 3 to 12.\\
\begin{table}[htbp]
\centering
\caption{Relative \(L^2\) error of AMO and other baselines using the latex glove DIC (Digital Image Correlation) original dataset.}
\begin{adjustbox}{width=\columnwidth}
\begin{tabular}{c c c c}
\toprule
\bf Number of hidden layers & \bf AMO & \bf IFNO & \bf FNO \\
\midrule
3  &  2.87E-02 $\pm$ 4.29E-04  &  3.43E-02 $\pm$ 4.96E-04  &  3.40E-02  $\pm$ 4.09E-04 \\
6   &  2.50E-02 $\pm$ 3.28E-04  &  3.34E-02 $\pm$ 4.53E-04  &  3.84E-02 $\pm$ 4.21E-04 \\
12  &  2.32E-02 $\pm$ 4.20E-04  &  3.32E-02 $\pm$ 4.41E-04 &  4.66E-02 $\pm$ 1.47E-03 \\
\bottomrule
\end{tabular}
\end{adjustbox}
\end{table}\\

In addition, \citet{ifno} also reported the results of generalized Mooney-Rivlin (GMR) model in two settings. The relative $L^2$ errors of GMR model fitting and GMR inverse analysis are 3.30E-01 and 2.91E-01, respectively. We can observe that our AMO consistently outperforms other models in every $L$. Finally, the best reported result of IFNO is 3.30E-02 $\pm$ 4.63E-04 when $L=24$ \citep{ifno}. Although we do not conduct the experiment $L=24$ due to the limited time, our AMO still performs better than the best result of IFNO.

\section{Conclusions}
In this paper, we propose a novel neural operator AMO for solving nonlinear PDEs on irregular geometries and singularities. AMO maps the physical tokens in an RKHS where the global spectral transform and data-dependent orthogonal kernels are incorporated. By conducting a tailored design of the TM layer and SSM block fully guided by the AFD theory, we show that the output of AMO exactly matches with AFD oepration, hence offering rigorous convergence guarantee and other desirable properties. We show that the novel architecture of AMO enables its outstanding performance compared to existing SOTA neural operators in a series of physical and financial benchmark problems.

\section{Reproducibility Statement}
All code and datasets have been either made publicly available in an anonymous repository or as a part of supplementary material to facilitate replication and verification. The experimental setup, including training steps, model configurations, and hardware details, is described in detail in the paper. We have also provided a full description of implementation details, to assist others in reproducing our experiments. Additionally, six benchmark datasets, such as pipe, are publicly available, ensuring consistent and reproducible evaluation results.

\bibliography{iclr2026_conference}

@article{tripura2023wavelet,
  title={Wavelet neural operator for solving parametric partial differential equations in computational mechanics problems},
  author={Tripura, Tapas and Chakraborty, Souvik},
  journal={Computer Methods in Applied Mechanics and Engineering},
  volume={404},
  pages={115783},
  year={2023},
  publisher={Elsevier}
}

@article{li2020fourier,
  title={Fourier neural operator for parametric partial differential equations},
  author={Li, Zongyi and Kovachki, Nikola and Azizzadenesheli, Kamyar and Liu, Burigede and Bhattacharya, Kaushik and Stuart, Andrew and Anandkumar, Anima},
  journal={arXiv preprint arXiv:2010.08895},
  year={2020}
}

@article{gupta2021multiwavelet,
  title={Multiwavelet-based operator learning for differential equations},
  author={Gupta, Gaurav and Xiao, Xiongye and Bogdan, Paul},
  journal={Advances in Neural Information Processing Systems},
  volume={34},
  pages={24048--24062},
  year={2021}
}

@article{rahman2022u,
  title={U-no: {U}-shaped neural operators},
  author={Rahman, Md Ashiqur and Ross, Zachary E and Azizzadenesheli, Kamyar},
  journal={arXiv preprint arXiv:2204.11127},
  year={2022}
}

@inproceedings{fanaskov2023spectral,
  title={Spectral neural operators},
  author={Fanaskov, Vladimir Sergeevich and Oseledets, Ivan V},
  booktitle={Doklady Mathematics},
  volume={108},
  pages={S226--S232},
  year={2023},
  organization={Springer}
}

@article{wu2023solving,
  title={Solving high-dimensional {PDEs} with latent spectral models},
  author={Wu, Haixu and Hu, Tengge and Luo, Huakun and Wang, Jianmin and Long, Mingsheng},
  journal={arXiv preprint arXiv:2301.12664},
  year={2023}
}

@article{li2023fourier,
  title={Fourier neural operator with learned deformations for {PDEs} on general geometries},
  author={Li, Zongyi and Huang, Daniel Zhengyu and Liu, Burigede and Anandkumar, Anima},
  journal={Journal of Machine Learning Research},
  volume={24},
  number={388},
  pages={1--26},
  year={2023}
}

@article{chen2024learning,
  title={Learning neural operators on {R}iemannian manifolds},
  author={Chen, Gengxiang and Liu, Xu and Meng, Qinglu and Chen, Lu and Liu, Changqing and Li, Yingguang},
  journal={National Science Open},
  volume={3},
  number={6},
  pages={20240001},
  year={2024},
  publisher={China Science Publishing \& Media Ltd. and EDP Sciences}
}

@article{lingsch2023beyond,
  title={Beyond regular grids: {F}ourier-based neural operators on arbitrary domains},
  author={Lingsch, Levi and Michelis, Mike Y and De B{\'e}zenac, Emmanuel and Perera, Sirani M and Katzschmann, Robert K and Mishra, Siddhartha},
  journal={arXiv preprint arXiv:2305.19663},
  year={2023}
}

@article{tiwari2025latent,
  title={Latent Mamba Operator for Partial Differential Equations},
  author={Tiwari, Karn and Dutta, Niladri and Krishnan, NM and others},
  journal={International Conference on Machine Learning},
  year={2025}
}

@article{gu2023mamba,
  title={Mamba: Linear-time sequence modeling with selective state spaces},
  author={Gu, Albert and Dao, Tri},
  journal={arXiv preprint arXiv:2312.00752},
  year={2023}
}

@article{gu2021efficiently,
  title={Efficiently modeling long sequences with structured state spaces},
  author={Gu, Albert and Goel, Karan and R{\'e}, Christopher},
  journal={arXiv preprint arXiv:2111.00396},
  year={2021}
}

@article{parnichkun2024state,
  title={State-free inference of state-space models: The transfer function approach},
  author={Parnichkun, Rom N and Massaroli, Stefano and Moro, Alessandro and Smith, Jimmy TH and Hasani, Ramin and Lechner, Mathias and An, Qi and R{\'e}, Christopher and Asama, Hajime and Ermon, Stefano and others},
  journal={arXiv preprint arXiv:2405.06147},
  year={2024}
}

@article{qian2011algorithm,
  title={Algorithm of adaptive Fourier decomposition},
  author={Qian, Tao and Zhang, Liming and Li, Zhixiong},
  journal={IEEE Transactions on Signal Processing},
  volume={59},
  number={12},
  pages={5899--5906},
  year={2011},
  publisher={IEEE}
}

@article{wang2022adaptive,
  title={Adaptive Fourier decomposition for multi-channel signal analysis},
  author={Wang, Ze and Wong, Chi Man and Rosa, Agostinho and Qian, Tao and Wan, Feng},
  journal={IEEE Transactions on Signal Processing},
  volume={70},
  pages={903--918},
  year={2022},
  publisher={IEEE}
}

@article{loshchilov2017decoupled,
  title={Decoupled weight decay regularization},
  author={Loshchilov, Ilya and Hutter, Frank},
  journal={arXiv preprint arXiv:1711.05101},
  year={2017}
}

@article{wu2024transolver,
  title={Transolver: A fast transformer solver for pdes on general geometries},
  author={Wu, Haixu and Luo, Huakun and Wang, Haowen and Wang, Jianmin and Long, Mingsheng},
  journal={arXiv preprint arXiv:2402.02366},
  year={2024}
}

@article{tran2021factorized,
  title={Factorized fourier neural operators},
  author={Tran, Alasdair and Mathews, Alexander and Xie, Lexing and Ong, Cheng Soon},
  journal={arXiv preprint arXiv:2111.13802},
  year={2021}
}

@article{wang2024latent,
  title={Latent neural operator for solving forward and inverse pde problems},
  author={Wang, Tian and Wang, Chuang},
  journal={Advances in Neural Information Processing Systems},
  volume={37},
  pages={33085--33107},
  year={2024}
}

@article{xiao2023improved,
  title={Improved operator learning by orthogonal attention},
  author={Xiao, Zipeng and Hao, Zhongkai and Lin, Bokai and Deng, Zhijie and Su, Hang},
  journal={arXiv preprint arXiv:2310.12487},
  year={2023}
}

@article{cao2021choose,
  title={Choose a transformer: Fourier or galerkin},
  author={Cao, Shuhao},
  journal={Advances in neural information processing systems},
  volume={34},
  pages={24924--24940},
  year={2021}
}

@article{li2022transformer,
  title={Transformer for partial differential equations' operator learning},
  author={Li, Zijie and Meidani, Kazem and Farimani, Amir Barati},
  journal={arXiv preprint arXiv:2205.13671},
  year={2022}
}

@article{wen2022u,
  title={U-FNO—An enhanced Fourier neural operator-based deep-learning model for multiphase flow},
  author={Wen, Gege and Li, Zongyi and Azizzadenesheli, Kamyar and Anandkumar, Anima and Benson, Sally M},
  journal={Advances in Water Resources},
  volume={163},
  pages={104180},
  year={2022},
  publisher={Elsevier}
}

@article{barles1998option,
  title={Option pricing with transaction costs and a nonlinear {Black-Scholes} equation},
  author={Barles, Guy and Soner, Halil Mete},
  journal={Finance and Stochastics},
  volume={2},
  number={4},
  pages={369--397},
  year={1998},
  publisher={Springer}
}

@inproceedings{gupta2022non,
  title={Non-linear operator approximations for initial value problems},
  author={Gupta, Gaurav and Xiao, Xiongye and Balan, Radu and Bogdan, Paul},
  booktitle={International Conference on Learning Representations (ICLR)},
  year={2022}
}

@article{xiao2023coupled,
  title={Coupled multiwavelet neural operator learning for coupled partial differential equations},
  author={Xiao, Xiongye and Cao, Defu and Yang, Ruochen and Gupta, Gaurav and Liu, Gengshuo and Yin, Chenzhong and Balan, Radu and Bogdan, Paul},
  journal={arXiv preprint arXiv:2303.02304},
  year={2023}
}

@article{yu2024nonlocal,
  title={Nonlocal attention operator: Materializing hidden knowledge towards interpretable physics discovery},
  author={Yu, Yue and Liu, Ning and Lu, Fei and Gao, Tian and Jafarzadeh, Siavash and Silling, Stewart A},
  journal={Advances in Neural Information Processing Systems},
  volume={37},
  pages={113797--113822},
  year={2024}
}

@article{zheng2024alias,
  title={Alias-free mamba neural operator},
  author={Zheng, Jianwei and Li, Wei and Xu, Ni and Zhu, Junwei and Zhang, Xiaoqin},
  journal={Advances in Neural Information Processing Systems},
  volume={37},
  pages={52962--52995},
  year={2024}
}

@article{cheng2024mamba,
  title={Mamba neural operator: Who wins? transformers vs. state-space models for pdes},
  author={Cheng, Chun-Wun and Huang, Jiahao and Zhang, Yi and Yang, Guang and Sch{\"o}nlieb, Carola-Bibiane and Aviles-Rivero, Angelica I},
  journal={arXiv preprint arXiv:2410.02113},
  year={2024}
}

@article{hu2024state,
  title={State-space models are accurate and efficient neural operators for dynamical systems},
  author={Hu, Zheyuan and Daryakenari, Nazanin Ahmadi and Shen, Qianli and Kawaguchi, Kenji and Karniadakis, George Em},
  journal={arXiv preprint arXiv:2409.03231},
  year={2024}
}

@article{qian2010intrinsic,
  title={Intrinsic mono-component decomposition of functions: an advance of {Fourier} theory},
  author={Qian, Tao},
  journal={Mathematical Methods in the Applied Sciences},
  volume={33},
  number={7},
  pages={880--891},
  year={2010},
  publisher={Wiley Online Library}
}

@article{qian2012adaptive,
  title={Adaptive {F}ourier decomposition of functions in quaternionic {H}ardy spaces},
  author={Qian, Tao and Spr{\"o}{\ss}ig, Wolfgang and Wang, Jinxun},
  journal={Mathematical Methods in the Applied Sciences},
  volume={35},
  number={1},
  pages={43--64},
  year={2012},
  publisher={Wiley Online Library}
}

@article{guibas2021adaptive,
  title={Adaptive {F}ourier neural operators: Efficient token mixers for transformers},
  author={Guibas, John and Mardani, Morteza and Li, Zongyi and Tao, Andrew and Anandkumar, Anima and Catanzaro, Bryan},
  journal={arXiv preprint arXiv:2111.13587},
  year={2021}
}

@book{saitoh2016theory,
  title={Theory of reproducing kernels and applications},
  author={Saitoh, Saburou and Sawano, Yoshihiro and others},
  volume={44},
  year={2016},
  publisher={Springer}
}

@article{ifno,
title = {Learning deep Implicit Fourier Neural Operators (IFNOs) with applications to heterogeneous material modeling},
journal = {Computer Methods in Applied Mechanics and Engineering},
volume = {398},
pages = {115296},
year = {2022},
author = {Huaiqian You and Quinn Zhang and Colton J. Ross and Chung-Hao Lee and Yue Yu},
}

@misc{luo2025transolver++,
      title={Transolver++: An Accurate Neural Solver for PDEs on Million-Scale Geometries}, 
      author={Huakun Luo and Haixu Wu and Hang Zhou and Lanxiang Xing and Yichen Di and Jianmin Wang and Mingsheng Long},
      year={2025},
      eprint={2502.02414},
      archivePrefix={arXiv},
      primaryClass={cs.LG},
      url={https://arxiv.org/abs/2502.02414}, 
}

@article{cao2024laplace,
  title={Laplace neural operator for solving differential equations},
  author={Cao, Qianying and Goswami, Somdatta and Karniadakis, George Em},
  journal={Nature Machine Intelligence},
  volume={6},
  number={6},
  pages={631--640},
  year={2024},
  publisher={Nature Publishing Group UK London}
}
\bibliographystyle{iclr2026_conference}

\appendix
\newpage

\section{Notation List}
\bgroup
\def\arraystretch{1.5}

\begin{tabular}{p{1.25in}p{3.75in}}
$\displaystyle a$ & Parameter function (input) \\
$\displaystyle \hat{u}_{N,\theta}$ & Output of AMO with $N$ blocks and parameters $\theta$ \\
$\displaystyle N$ & Number of processing blocks \\
$\displaystyle N_s$ & Number of input physical tokens \\
$\displaystyle M$ & Number of encoded latent tokens ($M \ll N_s$) \\
$\displaystyle D_{\text{embed}}$ & Embedding dimension of latent tokens \\
$\displaystyle \mathbf{x}_{\text{phys}}$ & Input physical features \\
$\displaystyle \mathbf{g}_{\text{phys}}$ & Positional embedding of coordinates \\
$\displaystyle \mathbf{z}_i$ & Token representation after the $i$-th block \\
$\displaystyle \mathbf{z}_0$ & Encoded tokens produced by the lifting operator $\mathcal{P}$ \\
$\displaystyle \mathbf{z}_1$ & Tokens mapped into RKHS by operator $\mathcal{R}$ \\
$\displaystyle \mathcal{P}$ & Lifting operator mapping physical tokens to encoded tokens \\
$\displaystyle \mathcal{Q}$ & Projection operator mapping latent tokens back to output space \\
$\displaystyle \mathcal{R}$ & Mapping operator from latent tokens to RKHS \\
$\displaystyle \mathcal{L}^i$ & Processing block at layer $i$ ($\mathrm{SSM}^i \circ \mathrm{TM}^i$) \\
$\displaystyle \mathcal{S}^i$ & Aggregation operator with skip connections at block $i$ \\
$\displaystyle \mathrm{TM}^i$ & TM layer performing spectral transform via TM bases \\
$\displaystyle \mathrm{SSM}^i$ & Bidirectional SSM block parameterized by transfer function \\
$\displaystyle \mathscr{B}_i(z;a_{1:i})$ & $i$-th TM basis generated by poles $a_{1:i}$ \\
$\displaystyle a_{1:i}$ & Set of learned poles $\{a_1,\ldots,a_i\}$ in the unit disk $\mathbb{D}$ \\
$\displaystyle K_a(z)$ & Reproducing kernel $\frac{1}{1-\overline{a}z}$\\
$\displaystyle H_i(e^{i\omega})$ & Transfer function of the $i$-th SSM block \\
$\displaystyle h_i[n]$ & Impulse response of the $i$-th SSM block \\
$\displaystyle \langle x,f\rangle$ & Inner product $\frac{1}{\Tilde{N}}\sum_{n=0}^{\Tilde{N}-1} x[n]\overline{f(e^{i2\pi n/\Tilde{N}})}$ \\
$\displaystyle \odot$ & Element-wise (Hadamard) product \\
$\displaystyle \mathcal{H}$ & Reproducing Kernel Hilbert Space (RKHS) \\
$\displaystyle \Tilde{N}$ & Length of signal in inner product definition \\
\end{tabular}
\vspace{0.25cm}

\section{Illustrative Examples}\label{appen_mex}
\paragraph{1-D advection PDE with high-frequency perturbation.}We evaluate LaMO on a   1-D linear advection benchmark governed by \begin{equation}
    u_t + c\,u_x = 0
\end{equation} on a periodic unit interval. Initial conditions $u_0(x)$ are synthesized as smooth Fourier mixtures $\sum_{k=1}^{k_{\max}} a_k \sin(2\pi k x + \phi_k)$ with amplitudes decaying as $a_k \sim (1+k)^{-1}$, to which we add a weak high-frequency spike at wavenumber $k_{\mathrm{hi}}$ to probe aliasing and phase accuracy. Trajectories are advanced to time $T$ with a conservative first-order upwind scheme at Courant number $\mathrm{CFL}=c\,\Delta t/\Delta x\le 0.5$, ensuring stability while preserving sharp phase relationships; the target is the advected field $u(\cdot,T)$.
\begin{figure}
    \centering
    \includegraphics[width=0.9\linewidth]{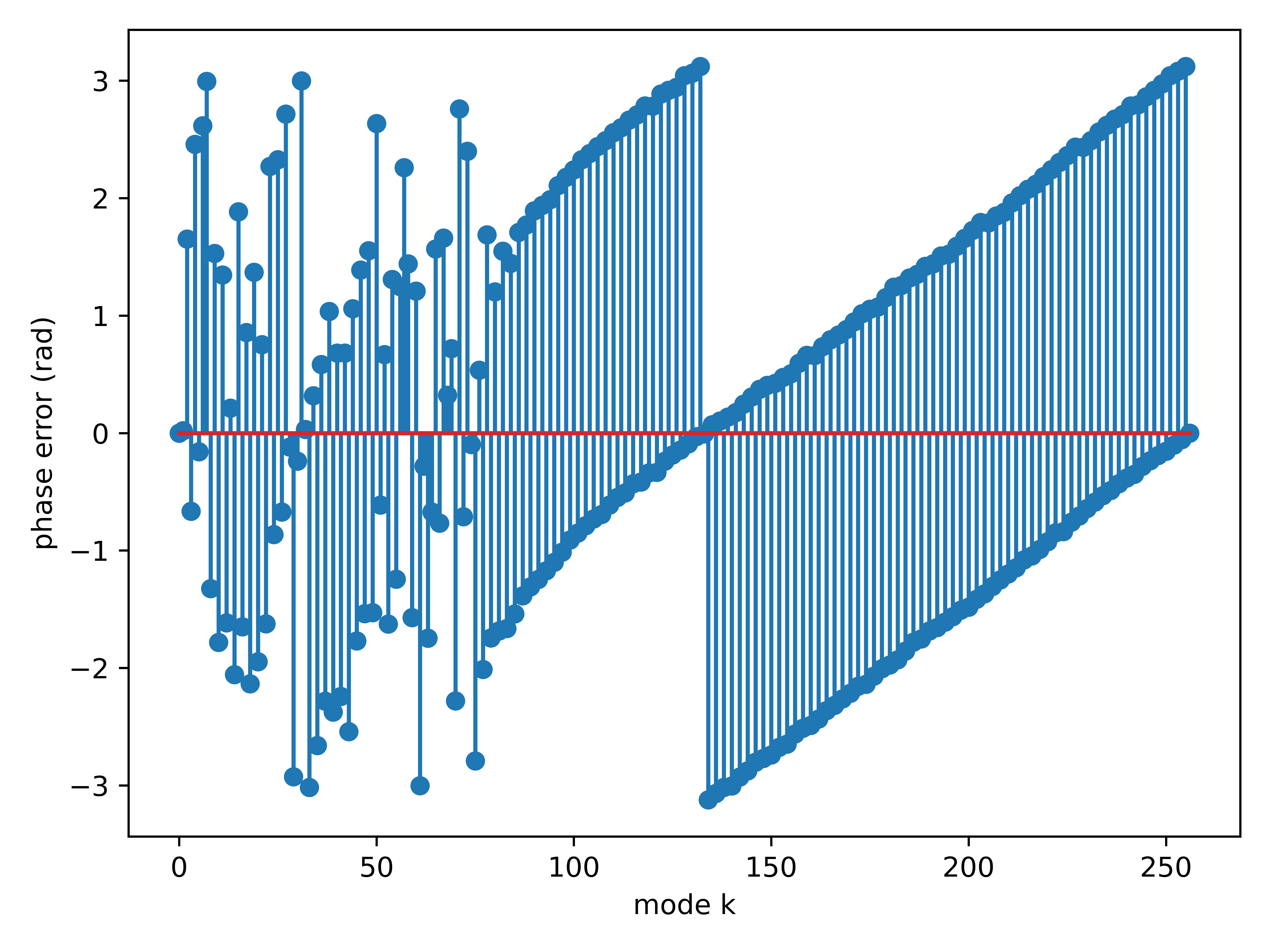}
    \caption{Phase error of solutions predicted by LaMO.}
    \label{fig:pe}
\end{figure}

Figure \ref{fig:pe} visualizes the phase error of LaMO’s predictions, revealing a pronounced degradation for high-frequency modes (approximately $k\in[140,\,250]$). This suggests that LaMO struggles to faithfully capture phase at the upper end of the spectrum.
\paragraph{2-D Darcy flow equation with fractal noise.} We construct a challenging 2-D Darcy dataset by solving 
\begin{equation}
 -\nabla\!\cdot\!\big(k(x,y)\nabla u(x,y)\big)=f(x,y)   
\end{equation} on $[0,1]^2$ with homogeneous Dirichlet boundaries, where the permeability $k$ is positive, highly heterogeneous, and fractal-like. Specifically, $k$ is generated by exponentiating a band-limited fractional Gaussian field (small Hurst parameter for roughness) and then modulating it with narrow channel masks and inclusions to induce strong anisotropy and high contrast. The forcing $f$ combines a weak background term with several randomized Gaussian sources/sinks, which produce near-singular behavior in the solution. The variable-coefficient elliptic problem is discretized on a Cartesian grid using a flux-conservative 5-point stencil with harmonic averaging of $k$, and solved to tight tolerance via conjugate gradients. For learning, each sample is subsampled irregularly: we draw $P$ points $\{(x_i,y_i)\}$ and record $u(x_i,y_i)$, yielding pairs $(\mathrm{XY},U)$ without exposing $k$ or $f$. 

To visualize and stress singular structures, we show in Figure \ref{fig_motiv} (a) and (c): (i) contours of the potential $u$ highlighting global flow topology, and (ii) a logarithmic map of the gradient magnitude, $\log |\nabla u|$, computed on a reconstructed dense grid via triangulation. Figure \ref{fig_motiv} shows LAMO cannot capture the singularities of $u$ and $\log |\nabla u|$. Furthermore, once the complex singularities appear, the performance of LAMO will be affected.  
\begin{figure}
  \centering
  \begin{subfigure}{0.48\linewidth}
    \centering
    \includegraphics[width=\linewidth, page=1]{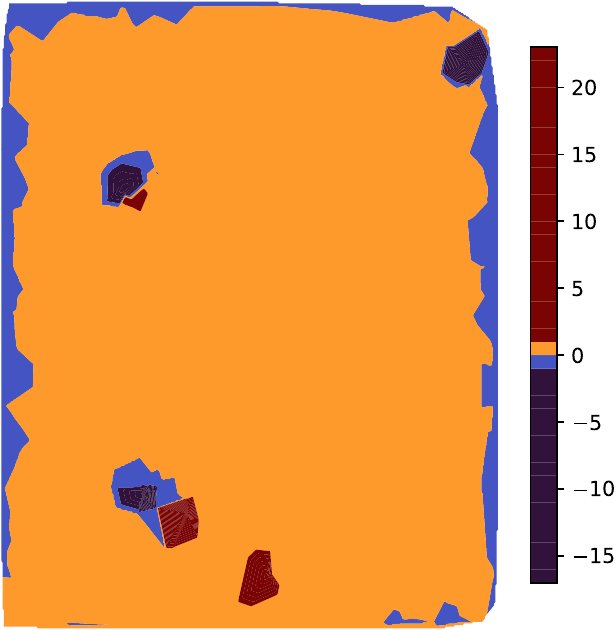}%
    \caption{Ground truth $u$}
  \end{subfigure}\hfill
  \begin{subfigure}{0.48\linewidth}
    \centering
    \includegraphics[width=\linewidth, page=1]{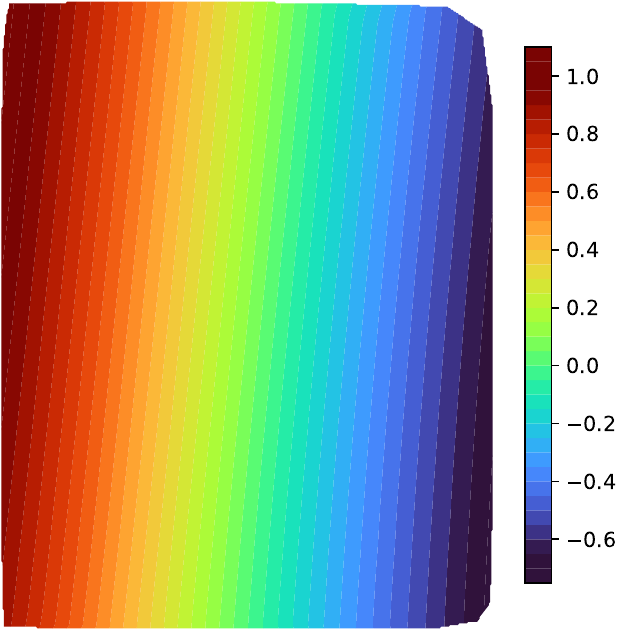}%
    \caption{Predicted by LaMO}
  \end{subfigure}

  \vspace{0.6em}

  \begin{subfigure}{0.48\linewidth}
    \centering
    \includegraphics[width=\linewidth, page=1]{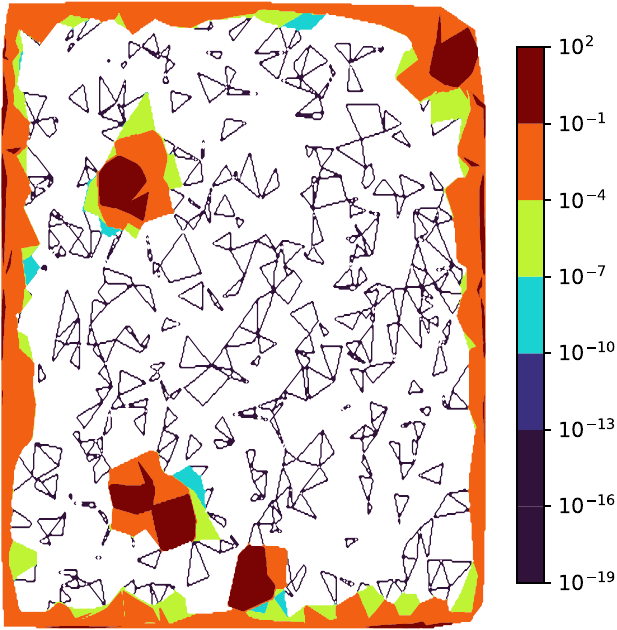}%
    \caption{Ground truth $\log |\nabla u|$}
  \end{subfigure}\hfill
  \begin{subfigure}{0.48\linewidth}
    \centering
    \includegraphics[width=\linewidth, page=1]{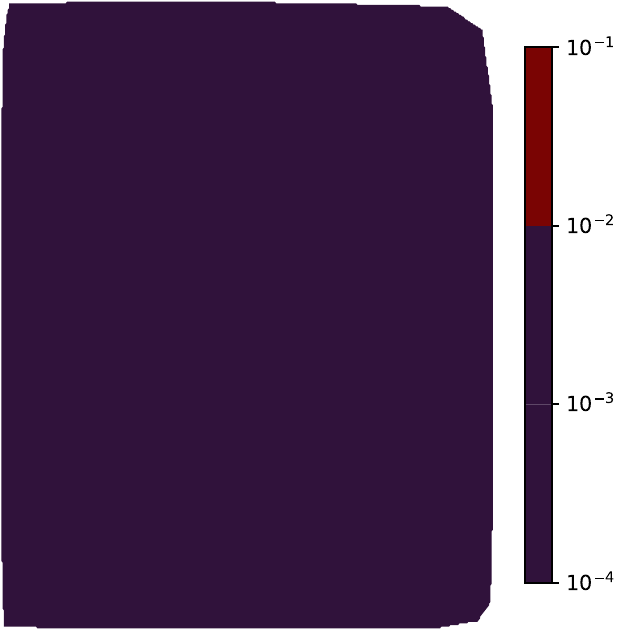}%
    \caption{Predicted by LaMO}
  \end{subfigure}

  \caption{The predicted results produced by LaMO compared to the ground truth.}\label{fig_motiv}
\end{figure}

\paragraph{3-D Brusselator problem.} We introduce a new 3-D Brusselator (diffusion-reaction equation) problem using the dataset from Laplace neural operator (LNO) \citep{cao2024laplace}. The Brusselator problem is formulated as:
\begin{align}\label{eq:reac_diff}
D\frac{\partial^2y}{\partial x^2}+ky^2- \frac{\partial y}{\partial t}=f(x,t),
\end{align}
where $y(x,t)$ represents the concentration of chemical substances or particles at location $x$ and time $t$, $f(x,t)$ is the source term and $A$ is the amplitude of the source term. In this problem, the diffusion coefficient, $D=0.01$, and the reaction rate, $k=0.01$.

\section{Distribution of selected poles reflects problem characteristics}

To understand how AMO's pole selection process is adaptive to the characteristics and nature of the problem, we illustrate the learned pole distributions for the 2-D Darcy flow problem and 3-D Brusselator problem in Figures \ref{fig:2dpole}. To clarify, here we give a brief overview of the visualization results: The distribution of selected poles for the 2-D Darcy flow problem is shown in Figures \ref{fig:2dpole} and \ref{fig:3dpole}, respectively.

\begin{figure}
    \centering
    \includegraphics[width=\linewidth]{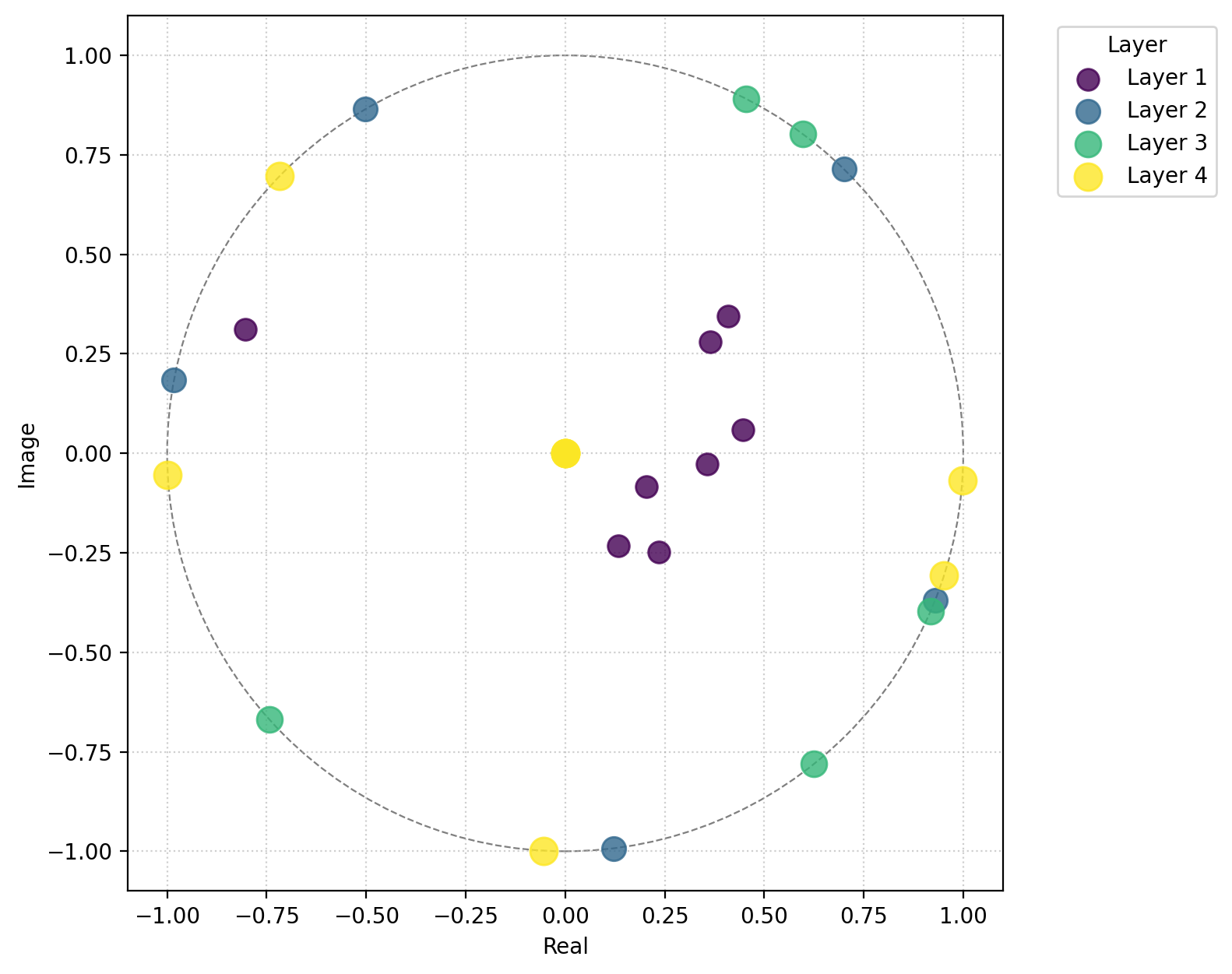}
    \caption{Learned poles distribution for the 2-D Darcy flow equation.}
    \label{fig:2dpole}
\end{figure}

\begin{figure}
    \centering
    \includegraphics[width=\linewidth]{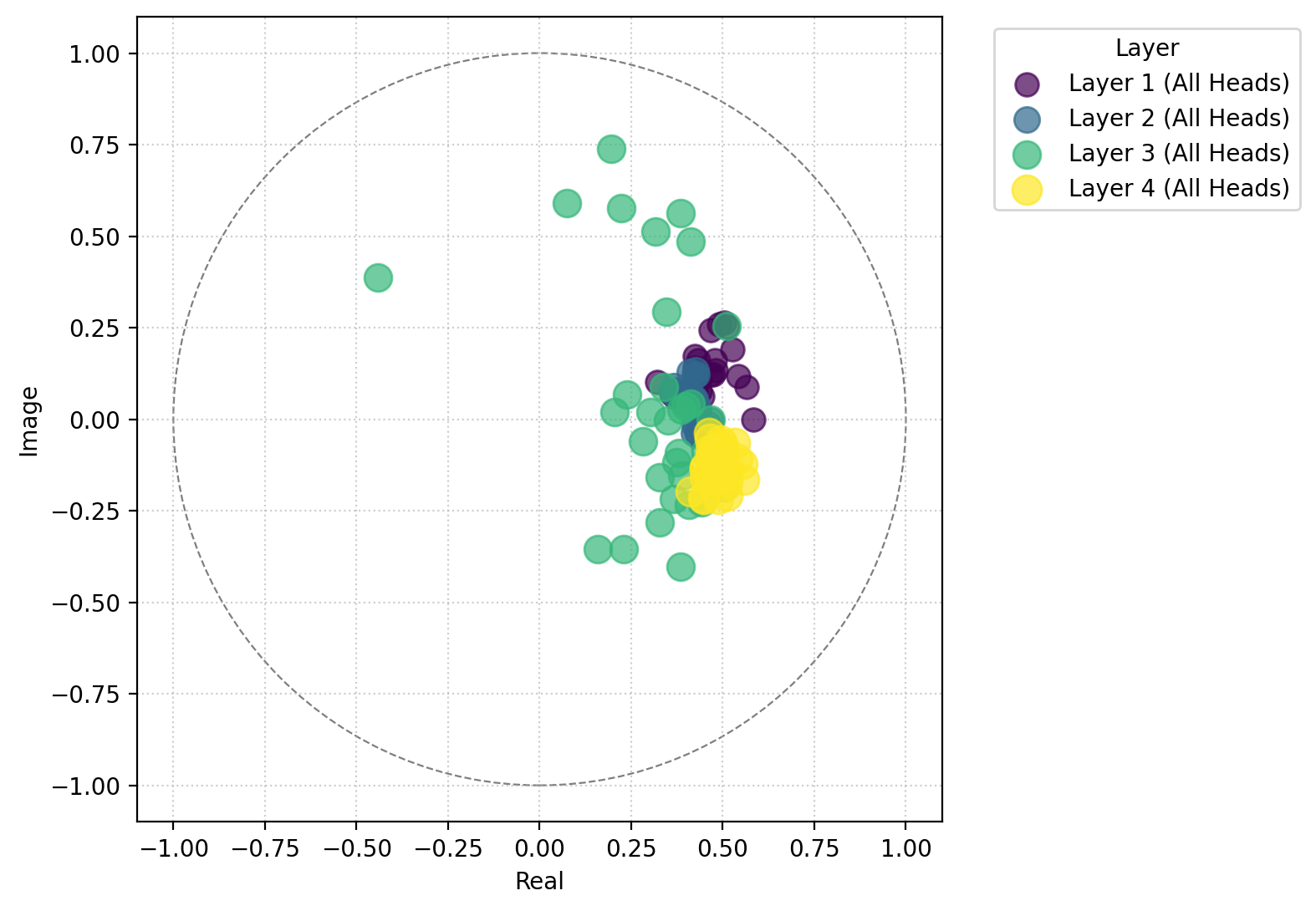}
    \caption{Learned poles distribution for the 3-D Brusselator equation.}
    \label{fig:3dpole}
\end{figure}

We observe that, across the layers, the learned poles of AMO on Darcy flow problem tend to approach to the boundary of the unit disk, while those on the Brusselator problem tend to be in the interior of the unit disk. The reason is that, Darcy flow problem is an elliptic equation, which is a smoothing operator. Thus, even though the input coefficient (the permeability) is very rough and discontinuous, the solution inside the domain will be well-behaved. Therefore, the challenging characteristics and singularities of the Darcy flow problem are located at the boundaries, and then more adaptive poles would be put there. Meanwhile, the complexity of the Brusselator problem does not come from the boundaries. It comes from the local, non-linear reaction that happens at every single point inside the domain. Therefore, most of the learned poles should be put inside the unit disk.

\section{Theoretical Results of AMO}
\label{appen_theore}

\paragraph{Basic settings.}
Let $\mathbb D=\{z\in\mathbb C:\,|z|<1\}$. Consider a reproducing kernel Hilbert space (RKHS) $(\mathcal H,\langle\cdot,\cdot\rangle_{\mathcal H})$ of complex-valued functions on $\mathbb D$ with the following properties.

\begin{assumption}\label{assump:rkhs-structure}
There is a family of normalized reproducing kernels $\{e_a: a\in\mathbb D\}\subset\mathcal H$ such that
\begin{equation}\label{eq:ea-RKHS}
e_a(z)=\frac{\sqrt{1-|a|^2}}{1-\overline a z}\in\mathcal H,
\qquad
\langle f,e_a\rangle_{\mathcal H}=f(a)\sqrt{1-|a|^2}\quad\forall\,f\in\mathcal H,\,a\in\mathbb D.
\end{equation}
Given a pole sequence $a_{1:\infty}=(a_1,a_2,\dots)\subset\mathbb D$, define the Takenaka--Malmquist (TM) system by
\begin{equation}\label{eq:TM-bases}
\mathscr B_1(z)=e_{a_1}(z),\qquad
\mathscr B_i(z)=e_{a_i}(z)\prod_{j=1}^{i-1}\frac{z-a_j}{1-\overline{a_j}z}\quad (i\ge 2).
\end{equation}
Assume $\{\mathscr B_i\}_{i\ge1}$ is an orthonormal system in $\mathcal H$, and its closed linear span equals the model space
\begin{equation}\label{eq:KB}
K_B:=\overline{\mathrm{span}}\{\mathscr B_i:\,i\ge1\}\subseteq \mathcal H,
\end{equation}
where $B$ is the Blaschke product with zeros $\{a_i\}$. 
\end{assumption}

\paragraph{AMO notation.}
Let $s\in\mathcal H$ be the latent target representation and $u^\star=\mathcal Q(s)$, where $\mathcal Q:\mathcal H\to\mathcal U$ is a Lipschitz decoder with constant $L_{\mathcal Q}$. Define the ideal TM coefficients and partial sums
\begin{equation}\label{eq:ci-star}
c_i^\star:=\langle s,\mathscr B_i\rangle_{\mathcal H},
\qquad
s_N:=\sum_{i=1}^N c_i^\star\,\mathscr B_i.
\end{equation}
AMO learns estimates $\widehat c_i$ of $c_i^\star$ (via an SSM in the frequency domain) and aggregates them through the skip connection:
\begin{equation}\label{eq:agg-update}
z_{i+1} := z_i + \widehat c_i\,\mathscr B_i,\qquad z_1:=0 .
\end{equation}

\subsection{Aggregation identity and frequency-domain coefficient extraction}

\begin{lemma}\label{lem:aggregation}
Under \ref{eq:agg-update}, one has, for every $N\in\mathbb N$,
\begin{equation}\label{eq:agg-conclusion}
z_{N+1}=\sum_{i=1}^N \widehat c_i\,\mathscr B_i.
\end{equation}
\end{lemma}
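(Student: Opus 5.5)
The identity follows by a direct induction on $N$ from the recursion \ref{eq:agg-update} together with the initialization $z_1:=0$. All the objects involved ($z_i$, $\widehat c_i\,\mathscr B_i$) are elements of the vector space $\mathcal H$. The argument therefore uses only the linear structure of $\mathcal H$. The orthonormality of the TM system in Assumption \ref{assump:rkhs-structure} and the reproducing property \ref{eq:ea-RKHS} are not needed here. They enter only later, when the coefficients $\widehat c_i$ are compared with $c_i^\star$.

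\emph{Base case} ($N=1$). Apply \ref{eq:agg-update} with $i=1$ to get $z_2=z_1+\widehat c_1\mathscr B_1=\widehat c_1\mathscr B_1$. This is exactly \ref{eq:agg-conclusion} for $N=1$.

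\emph{Inductive step.} Assume $z_{N+1}=\sum_{i=1}^N\widehat c_i\mathscr B_i$. Applying \ref{eq:agg-update} with $i=N+1$ gives
\begin{equation*}
z_{N+2}=z_{N+1}+\widehat c_{N+1}\mathscr B_{N+1}=\sum_{i=1}^{N}\widehat c_i\mathscr B_i+\widehat c_{N+1}\mathscr B_{N+1}=\sum_{i=1}^{N+1}\widehat c_i\mathscr B_i,
\end{equation*}
which is the claim for $N+1$. An equivalent view is a telescoping sum: $z_{N+1}-z_1=\sum_{i=1}^N(z_{i+1}-z_i)=\sum_{i=1}^N\widehat c_i\mathscr B_i$.

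There is no real obstacle. The only point needing care is consistency with the architecture. In \ref{eqn_aggre} the first block is written without a skip term, $\mathbf z_2=\hat{\mathbf z}_2[0]\odot\mathscr B_1$. Under \ref{eq:agg-update} this is the same as taking $z_1=0$, so the base case matches the $i=1$ branch of $\mathcal S^1$. I would state this identification explicitly so that the lemma applies to the network as defined.
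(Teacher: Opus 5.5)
Your proof is correct and matches the paper's own argument: induction on $N$, with base case $z_2=z_1+\widehat c_1\mathscr B_1=\widehat c_1\mathscr B_1$ from $z_1=0$, and the inductive step obtained by adding $\widehat c_{N+1}\mathscr B_{N+1}$ to the inductive hypothesis. The telescoping remark and your note matching $z_1=0$ to the skip-free $i=1$ branch of the aggregation layer are harmless additions.
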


\begin{proof}
The proof is by induction. For $N=1$, $z_2=z_1+\widehat c_1\mathscr B_1=\widehat c_1\mathscr B_1$, so \ref{eq:agg-conclusion} holds. Assume \ref{eq:agg-conclusion} holds for $N$, i.e.,
$z_{N+1}=\sum_{i=1}^N \widehat c_i\,\mathscr B_i$. Then
\[
z_{N+2}=z_{N+1}+\widehat c_{N+1}\mathscr B_{N+1}
=\sum_{i=1}^{N+1}\widehat c_i\,\mathscr B_i,
\]
which establishes the claim for $N+1$.
\end{proof}

\begin{lemma}\label{lem:parseval}
Suppose the $i$-th SSM has transfer function
\begin{equation}\label{eq:Hi}
H_i(e^{i\omega})=\overline{\mathscr B_i(e^{i\omega})},
\end{equation}
so that the block multiplies the input spectrum by $\overline{\mathscr B_i}$ and outputs the zero-lag correlation. If the discrete inner product used by AMO is a consistent quadrature for $\langle\cdot,\cdot\rangle_{\mathcal H}$ on the class $\{s\}\cup\{\mathscr B_i\}$, then
\begin{equation}\label{eq:ci-equals}
\widehat c_i \to \langle s,\mathscr B_i\rangle_{\mathcal H} = c_i^\star
\quad\text{as the quadrature is refined.}
\end{equation}
\end{lemma}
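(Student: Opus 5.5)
The argument has three steps: identify the SSM output as a discrete inner product, pass to the continuous Hardy-space inner product, and invoke the reproducing property.

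\emph{Step 1: the SSM output is a discrete inner product.} Let $s[n]=s(e^{i2\pi n/M})$ denote the $M$ boundary samples fed to the $i$-th block. Setting $H_i=\overline{\mathscr B_i}$ makes the block's impulse response $h_i[n]$ equal to the $n$-th Fourier coefficient of $\overline{\mathscr B_i}$. The time-domain correlation formula of Section~\ref{AMO} then gives, at zero lag,
\[
\widehat c_i=\frac{1}{M}\sum_{n=0}^{M-1}s(e^{i2\pi n/M})\,\overline{\mathscr B_i(e^{i2\pi n/M})}.
\]
This is exactly the discrete inner product $\langle s,\mathscr B_i\rangle$, with the normalization $1/\tilde N$ and $\tilde N=M$. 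It is also the $M$-point trapezoidal (Riemann) rule for
\[
\frac{1}{2\pi}\int_0^{2\pi}s(e^{i\omega})\overline{\mathscr B_i(e^{i\omega})}\,d\omega .
\]
Within a single block the poles $a_{1:i}$ are held fixed, so $\mathscr B_i$ is a fixed function.

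\emph{Step 2: the continuous integral is $c_i^\star$.} In the model case $\mathcal H=H^2(\mathbb D)$, the inner product is the boundary integral, so the integral above is $\langle s,\mathscr B_i\rangle_{\mathcal H}=c_i^\star$. Assumption~\ref{assump:rkhs-structure} is consistent with this, since $e_a$ has the stated reproducing property in $H^2$. For a general $\mathcal H$ this identification is precisely the hypothesis that the discrete product is a consistent quadrature for $\langle\cdot,\cdot\rangle_{\mathcal H}$ on $\{s\}\cup\{\mathscr B_i\}$. Under that hypothesis,
\[
\lim_{M\to\infty}\widehat c_i=\langle s,\mathscr B_i\rangle_{\mathcal H}=c_i^\star,
\]
which is the claim.

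\emph{Step 3: why consistency holds, and a rate.} Each $\mathscr B_i$ is a rational function whose poles $1/\overline{a_j}$ lie outside $\overline{\mathbb D}$. It is therefore analytic on the annulus $|z|<\rho_i:=\min_j 1/|a_j|$. If $s$ is at least continuous on the circle, the integrand is Riemann integrable, and the equispaced rule converges. If $s$ also extends analytically across the circle, the trapezoidal rule converges geometrically. Expanding $s\overline{\mathscr B_i}$ in a Fourier series, the rule picks up only the aliased coefficients at indices $kM$, $k\ne0$. For $s\in H^2$ one can instead write the error as $\sum_{k\ge1}$ of cross terms $\hat s(m+kM)\,\overline{\hat{\mathscr B}_i(m)}$. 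The coefficients of $\mathscr B_i$ decay like $\rho_i^{-m}$, so the error is bounded by a geometric factor times the tail $\bigl(\sum_{m\ge M}|\hat s(m)|^2\bigr)^{1/2}$, which tends to $0$.

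The main obstacle is Step 2 and this aliasing bound. Boundary samples of an $H^2$ function need not be defined pointwise, and a general $\mathcal H$ may not carry the boundary-integral inner product. I would therefore state the result under the quadrature hypothesis, and prove consistency concretely only for $H^2$ with $s$ continuous up to the boundary, using the Cauchy–Schwarz aliasing estimate above.
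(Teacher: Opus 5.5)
Your Steps 1--2 are the paper's proof: identify the zero-lag SSM output with the discretized inner product $\langle s,\mathscr B_i\rangle_{\mathrm{disc}}$, then let the assumed quadrature consistency carry it to $\langle s,\mathscr B_i\rangle_{\mathcal H}=c_i^\star$, so the proposal is correct and matches. Step 3 is not needed, since consistency is a hypothesis of the lemma, and its rate claim is wrong as stated: the $k\ge1$ aliasing terms are controlled by the $\ell^1$ tail $\sum_{n\ge M}|\hat s(n)|$, not the $\ell^2$ tail. For example, take $a_1=0$, so $\mathscr B_1\equiv1$, and $s(z)=\sum_{k=1}^{K}k^{-1}z^{kM}$; the error is $\sum_{k\le K}1/k$ while the $\ell^2$ tail stays below $\pi/\sqrt{6}$, so for continuous $s$ you should cite plain Riemann-sum convergence instead.
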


\begin{proof}
By \ref{eq:Hi}, the block forms (pointwise on the grid) $Y_i=\overline{\mathscr B_i}\cdot s$ in the transform domain; the zero-lag correlation is the discretized inner product $\langle s,\mathscr B_i\rangle_{\text{disc}}$. Consistency of the quadrature implies $\langle s,\mathscr B_i\rangle_{\text{disc}}\to \langle s,\mathscr B_i\rangle_{\mathcal H}$ as the grid is refined. Hence $\widehat c_i\to c_i^\star$.
\end{proof}

\subsection{Convergence in the model space and projection error}

\begin{theorem}\label{thm:projection}
Under Assumption~\ref{assump:rkhs-structure}, if AMO recovers the exact coefficients $c_i^\star=\langle s,\mathscr B_i\rangle_{\mathcal H}$, then
\begin{equation}\label{eq:proj-conv}
s_N:=\sum_{i=1}^N c_i^\star\mathscr B_i \;\xrightarrow[N\to\infty]{\mathcal H}\; \Pi_{K_B}s,
\end{equation}
the orthogonal projection of $s$ onto $K_B$. Consequently,
\begin{equation}\label{eq:decoder-proj}
\|u^\star-\mathcal Q(s_N)\|
\;\le\;
L_{\mathcal Q}\,\|s-\Pi_{K_B}s\|_{\mathcal H} + L_{\mathcal Q}\,\|\Pi_{K_B}s-s_N\|_{\mathcal H}
\;\xrightarrow[N\to\infty]{}\; L_{\mathcal Q}\,\mathrm{dist}(s,K_B).
\end{equation}
\end{theorem}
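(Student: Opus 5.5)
The argument reduces to standard Hilbert-space facts about orthonormal systems, followed by one application of the Lipschitz bound on $\mathcal Q$.

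\textbf{Step 1: the partial sums converge.} By Assumption~\ref{assump:rkhs-structure}, $\{\mathscr B_i\}_{i\ge1}$ is orthonormal in $\mathcal H$, so Bessel's inequality gives
\[
\sum_{i\ge1}|c_i^\star|^2\le\|s\|_{\mathcal H}^2 .
\]
For $M>N$, orthonormality gives
\[
\|s_M-s_N\|_{\mathcal H}^2=\sum_{i=N+1}^M|c_i^\star|^2 .
\]
This is the tail of a convergent series, so $(s_N)$ is Cauchy in $\mathcal H$. By completeness, $s_N\to s_\infty:=\sum_{i\ge1}c_i^\star\mathscr B_i$ in $\mathcal H$. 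Each $s_N$ lies in $K_B$, which is closed by definition, so $s_\infty\in K_B$.

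\textbf{Step 2: identify the limit as $\Pi_{K_B}s$.} It suffices to show $s-s_\infty\perp K_B$. For each fixed $j$, continuity of the inner product and orthonormality give
\[
\langle s-s_\infty,\mathscr B_j\rangle_{\mathcal H}=c_j^\star-\lim_{N\to\infty}\langle s_N,\mathscr B_j\rangle_{\mathcal H}=c_j^\star-c_j^\star=0 .
\]
So $s-s_\infty$ is orthogonal to every finite linear combination of the $\mathscr B_j$. Since the inner product is continuous, it is also orthogonal to their closure $K_B$. The projection theorem characterises $\Pi_{K_B}s$ as the unique element $p\in K_B$ with $s-p\perp K_B$. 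Hence $s_\infty=\Pi_{K_B}s$, which proves \ref{eq:proj-conv}.

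\textbf{Step 3: the decoder bound.} Since $u^\star=\mathcal Q(s)$ and $\mathcal Q$ is $L_{\mathcal Q}$-Lipschitz,
\[
\|u^\star-\mathcal Q(s_N)\|\le L_{\mathcal Q}\|s-s_N\|_{\mathcal H}.
\]
Splitting with the triangle inequality,
\[
\|s-s_N\|_{\mathcal H}\le\|s-\Pi_{K_B}s\|_{\mathcal H}+\|\Pi_{K_B}s-s_N\|_{\mathcal H},
\]
which is the stated inequality. The second term tends to $0$ by Step 1–2. For the first term, the projection theorem gives $\|s-\Pi_{K_B}s\|_{\mathcal H}=\mathrm{dist}(s,K_B)$, so the right-hand side tends to $L_{\mathcal Q}\,\mathrm{dist}(s,K_B)$.

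Orthogonality also yields the sharper identity
\[
\|s-s_N\|_{\mathcal H}^2=\mathrm{dist}(s,K_B)^2+\sum_{i>N}|c_i^\star|^2 .
\]
This is worth recording, though the theorem does not need it.

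\textbf{Main obstacle.} The only delicate point is Step 2, identifying the limit with the projection, because $K_B$ is defined as a closed span rather than a finite-dimensional space. The orthogonality must be passed from the individual basis elements to the whole closure via continuity, and the limit must be placed in $K_B$ via closedness. Everything else is Bessel's inequality and the triangle inequality.
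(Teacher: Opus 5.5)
Your proof is correct and follows essentially the same route as the paper: identify $s_N$ as the partial sums of the orthonormal expansion of $\Pi_{K_B}s$ in the basis $\{\mathscr B_i\}$ of $K_B$, then apply the Lipschitz bound on $\mathcal Q$ and the triangle inequality. The paper starts from $\Pi_{K_B}s$, notes its coefficients are $\langle s,\mathscr B_i\rangle_{\mathcal H}$, and cites norm convergence of orthogonal series as standard; you instead build the limit via Bessel's inequality and completeness and identify it through the orthogonality characterization of the projection, which is a self-contained version of the same argument (and your extra identity $\|s-s_N\|_{\mathcal H}^2=\mathrm{dist}(s,K_B)^2+\sum_{i>N}|c_i^\star|^2$ is also correct).
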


\begin{proof}
Because $\{\mathscr B_i\}$ is an orthonormal basis (ONB) of $K_B$, the Fourier expansion of $\Pi_{K_B}s$ in this ONB has coefficients $\langle s,\mathscr B_i\rangle_{\mathcal H}$, and the $N$-th partial sum equals $s_N$. Convergence in norm to the projection is standard for orthogonal series in a Hilbert space, giving \ref{eq:proj-conv}. The bound \ref{eq:decoder-proj} follows from Lipschitz continuity of $\mathcal Q$:
\[
\|u^\star-\mathcal Q(s_N)\|
=\|\mathcal Q(s)-\mathcal Q(s_N)\|
\le L_{\mathcal Q}\|s-s_N\|
\le L_{\mathcal Q}\big(\|s-\Pi_{K_B}s\|+\|\Pi_{K_B}s-s_N\|\big).
\]
\end{proof}

\begin{remark}
No greedy or maximal selection is used. The MLP-generated poles determine $K_B$; AMO converges to $\Pi_{K_B}s$, and to $s$ whenever $s\in K_B$.
\end{remark}

\subsection{Best N-term error and rates without greedy selection}

\begin{definition}\label{def:bestN}
Let $\mathcal D:=\{\mathscr B_i(\cdot;a_{1:i}):\,a_{1:i}\in\mathbb D^i,\,i\in\mathbb N\}$ be the TM dictionary. Define the best $N$-term error
\begin{equation}\label{eq:EN}
E_N(s)\;:=\;\inf_{a_{1:N},\,c_{1:N}}
\Big\| s-\sum_{i=1}^N c_i\,\mathscr B_i(\cdot;a_{1:i})\Big\|_{\mathcal H}.
\end{equation}
\end{definition}

\begin{theorem}\label{thm:error-decomp}
Let $\tilde a_{1:N}$ be the poles output by the MLP and set $c_i^\star=\langle s,\mathscr B_i(\cdot;\tilde a_{1:i})\rangle_{\mathcal H}$. If AMO learns $\widehat c_i$, then
\begin{equation}\label{eq:error-decomp-sharp}
\Big\| s-\sum_{i=1}^N \widehat c_i\,\mathscr B_i(\cdot;\tilde a_{1:i})\Big\|_{\mathcal H}
\;\le\;
E_N(s)\;+\;\Delta_{\text{pole}}(N)\;+\;\Big(\sum_{i=1}^N |\widehat c_i-c_i^\star|^2\Big)^{\!\frac12},
\end{equation}
where
\begin{equation}\label{eq:Delta-pole}
\Delta_{\text{pole}}(N):=\inf_{c_{1:N}}\Big\| s-\sum_{i=1}^N c_i\,\mathscr B_i(\cdot;\tilde a_{1:i})\Big\|_{\mathcal H}-E_N(s)\;\ge 0.
\end{equation}
\end{theorem}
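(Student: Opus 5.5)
The bound follows from one triangle inequality that splits off a coefficient-estimation error, plus the fact that the exact TM coefficients give the best approximation from the span of the learned basis.

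\textbf{Step 1: reduce to a finite-dimensional projection.} Fix the learned poles $\tilde a_{1:N}$ and write $\mathscr B_i:=\mathscr B_i(\cdot;\tilde a_{1:i})$. Let $V_N:=\mathrm{span}\{\mathscr B_1,\dots,\mathscr B_N\}$. By Assumption~\ref{assump:rkhs-structure}, the vectors $\mathscr B_1,\dots,\mathscr B_N$ are orthonormal, so the orthogonal projection of $s$ onto $V_N$ is
\[
P_Ns=\sum_{i=1}^N c_i^\star\mathscr B_i .
\]
The best-approximation property of orthogonal projections in a Hilbert space then gives
\[
\|s-P_Ns\|_{\mathcal H}=\inf_{c_{1:N}}\Big\|s-\sum_{i=1}^N c_i\mathscr B_i\Big\|_{\mathcal H}.
\]
By the definition \ref{eq:Delta-pole}, the right-hand side equals $E_N(s)+\Delta_{\text{pole}}(N)$. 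The quantity $\Delta_{\text{pole}}(N)$ is nonnegative because fixing the poles to $\tilde a_{1:N}$ restricts the infimum in \ref{eq:EN}.

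\textbf{Step 2: isolate the coefficient error.} Insert $P_Ns$ and apply the triangle inequality:
\[
\Big\|s-\sum_{i=1}^N\widehat c_i\mathscr B_i\Big\|_{\mathcal H}\le\|s-P_Ns\|_{\mathcal H}+\Big\|\sum_{i=1}^N(c_i^\star-\widehat c_i)\mathscr B_i\Big\|_{\mathcal H}.
\]
Orthonormality and Pythagoras (Parseval in $V_N$) show that the second term equals $\big(\sum_{i=1}^N|\widehat c_i-c_i^\star|^2\big)^{1/2}$. Substituting Step~1 into the first term yields \ref{eq:error-decomp-sharp}.

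\textbf{Expected obstacle.} There is no real analytic difficulty. The only delicate point is making sure the infimum in \ref{eq:Delta-pole} is attained by the $c_i^\star$. This relies entirely on the orthonormality of the TM system for the \emph{fixed} learned poles, which is assumed in Assumption~\ref{assump:rkhs-structure}. If repeated poles arise, I would note that the TM construction still produces an orthonormal system, so the argument is unaffected.

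Two remarks on scope. First, the combination of the first two terms into $E_N(s)+\Delta_{\text{pole}}(N)$ is an equality, so the only loss in the whole bound comes from the triangle inequality in Step~2. Second, the bound is stated without a decoder; if one wants it in output space, one multiplies by $L_{\mathcal Q}$ as in Theorem~\ref{thm:projection}.
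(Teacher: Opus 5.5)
Your proof is correct, and it takes a different and cleaner route than the paper. The paper chooses poles and coefficients that ($\varepsilon$-)attain $E_N(s)$ and forms the competitor $s_N^{\mathrm{best}}$. It then splits the error three ways, $\|s-s_N^{\mathrm{best}}\|+\|s_N^{\mathrm{best}}-\sum_i c_i^\star\mathscr B_i(\cdot;\tilde a_{1:i})\|+\|\sum_i (c_i^\star-\widehat c_i)\mathscr B_i(\cdot;\tilde a_{1:i})\|$, and bounds the middle term by $\Delta_{\text{pole}}(N)$, citing its definition. The detour is presumably meant to exhibit $\Delta_{\text{pole}}$ as the gap between the learned and optimal approximations. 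You never introduce a competitor. Instead you note that $E_N(s)+\Delta_{\text{pole}}(N)$ is, by definition and by the best-approximation property of the projection onto $V_N$, exactly $\|s-P_Ns\|_{\mathcal H}$; one triangle inequality plus Parseval then finishes.

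What your route buys is rigor, not just brevity. $\Delta_{\text{pole}}(N)$ compares the residual norms $\|s-P_Ns\|$ and $\|s-s_N^{\mathrm{best}}\|$, not the approximants themselves. From the definition one only gets $\|s_N^{\mathrm{best}}-P_Ns\|\le 2E_N(s)+\Delta_{\text{pole}}(N)$ (up to $\varepsilon$). The paper's middle bound can actually fail when the optimal poles are not unique. For example, take $N=1$ and $s$ with real Taylor coefficients whose optimal pole $a$ is non-real. Then $\overline a$ is also optimal, so taking $\tilde a_1=\overline a$ gives $\Delta_{\text{pole}}(1)=0$. Yet $s_1^{\mathrm{best}}$, a nonzero multiple of $e_a$, differs from $P_1s$, a nonzero multiple of $e_{\overline a}$.

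So your two-term decomposition is the argument that actually delivers the stated constant, and it also avoids the $\varepsilon$-attainment bookkeeping. As you note, the only slack left is in Step 2, and even that can be removed. Since $s-P_Ns\perp V_N$, Pythagoras gives the identity $\|s-\sum_i\widehat c_i\mathscr B_i\|_{\mathcal H}^2=(E_N(s)+\Delta_{\text{pole}}(N))^2+\sum_i|\widehat c_i-c_i^\star|^2$, which is sharper than the stated bound.
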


\begin{proof}
Choose $a^{\mathrm{best}}_{1:N},c^{\mathrm{best}}_{1:N}$ that attain (or $\varepsilon$-attain) $E_N(s)$ and denote
$s_N^{\mathrm{best}}:=\sum_{i=1}^N c_i^{\mathrm{best}}\mathscr B_i(\cdot;a^{\mathrm{best}}_{1:i})$.
Then
\begin{align*}
\left\| s-\sum_{i=1}^N \widehat c_i\mathscr B_i(\cdot;\tilde a_{1:i})\right\|
&\le \|s-s_N^{\mathrm{best}}\|
+\left\| s_N^{\mathrm{best}}-\sum_{i=1}^N c_i^\star\mathscr B_i(\cdot;\tilde a_{1:i})\right\|
+\left\|\sum_{i=1}^N (c_i^\star-\widehat c_i)\mathscr B_i(\cdot;\tilde a_{1:i})\right\|\\
&\le E_N(s)+\Delta_{\text{pole}}(N)
+\Big(\sum_{i=1}^N |c_i^\star-\widehat c_i|^2\Big)^{1/2}.
\end{align*}
The last inequality uses the definition of $\Delta_{\text{pole}}(N)$ and orthonormality of
$\{\mathscr B_i(\cdot;\tilde a_{1:i})\}_{i=1}^N$.
\end{proof}

\begin{corollary}\label{cor:weak-lp}
Assume for the fixed MLP-produced poles $\tilde a_{1:i}$ that the exact TM coefficients satisfy the weak-$\ell^p$ decay
\[
|c_i^\star|^\ast \le C\,i^{-1/p},\qquad 0<p<2,
\]
where $(|c_i^\star|^\ast)$ is the nonincreasing rearrangement. Then
\begin{equation}\label{eq:rate-weak-lp}
\inf_{c_{1:N}}\Big\| s-\sum_{i=1}^N c_i\,\mathscr B_i(\cdot;\tilde a_{1:i})\Big\|_{\mathcal H}
=\mathcal O\!\big(N^{\frac12-\frac1p}\big).
\end{equation}
If, in addition, $\Delta_{\text{pole}}(N)=o(1)$ and $\big(\sum_{i=1}^N|\widehat c_i-c_i^\star|^2\big)^{1/2}=o(1)$, then the AMO error in \ref{eq:error-decomp-sharp} is $\mathcal O\!\big(N^{\frac12-\frac1p}\big)$.
\end{corollary}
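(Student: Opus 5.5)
The plan is to reduce the fixed-pole infimum in \ref{eq:rate-weak-lp} to a tail sum of TM coefficients using orthonormality. Then I bound that tail by comparing with an integral. Finally I feed the result into Theorem~\ref{thm:error-decomp}.

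\textbf{Step 1 (exact form of the infimum).} Fix the MLP poles $\tilde a_{1:N}$. By Assumption~\ref{assump:rkhs-structure}, $\{\mathscr B_i(\cdot;\tilde a_{1:i})\}_{i=1}^N$ is orthonormal. The best approximation of $s$ from its span is therefore the orthogonal projection, attained at $c_i=c_i^\star$. Pythagoras, together with Theorem~\ref{thm:projection}, gives
\[
\inf_{c_{1:N}}\Big\| s-\sum_{i=1}^N c_i\mathscr B_i\Big\|_{\mathcal H}^2=\|s\|^2-\sum_{i=1}^N|c_i^\star|^2=\mathrm{dist}(s,K_B)^2+\sum_{i>N}|c_i^\star|^2 .
\]
For a rate that tends to zero, the term $\mathrm{dist}(s,K_B)$ must vanish. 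I will therefore state explicitly that $s\in K_B$, which is the regime of the Remark after Theorem~\ref{thm:projection}. Under that assumption the left side equals the pure tail $\sum_{i>N}|c_i^\star|^2$.

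\textbf{Step 2 (tail bound).} This is the main obstacle. The hypothesis controls only the nonincreasing rearrangement $|c_i^\star|^\ast$, while the infimum uses the \emph{first} $N$ TM functions rather than the $N$ largest coefficients. In general, $\sum_{i>N}|c_i^\star|^2$ need not be bounded by $\sum_{i>N}(|c_i^\star|^\ast)^2$.

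I would first handle this by assuming the poles are indexed so that $|c_i^\star|$ is nonincreasing, so that $|c_i^\star|=|c_i^\star|^\ast$. This is natural, since the MLP outputs a set of poles that can be reordered, although reordering changes the TM functions. Alternatively, one could assume directly that the pointwise bound $|c_i^\star|\le C i^{-1/p}$ holds in the given order.

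In either case the tail satisfies
\[
\sum_{i>N}|c_i^\star|^2\le C^2\sum_{i>N}i^{-2/p}\le C^2\int_N^\infty x^{-2/p}\,dx=\frac{C^2}{2/p-1}N^{1-2/p}.
\]
The integral converges because $p<2$ gives $2/p>1$. Taking square roots yields $\mathcal O(N^{1/2-1/p})$, which is \ref{eq:rate-weak-lp}.

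\textbf{Step 3 (AMO error).} By the definition \ref{eq:Delta-pole}, $E_N(s)+\Delta_{\text{pole}}(N)$ is exactly the fixed-pole infimum bounded in Step 2. Theorem~\ref{thm:error-decomp} then gives
\[
\Big\| s-\sum_{i=1}^N \widehat c_i\mathscr B_i\Big\|\le \mathcal O\big(N^{1/2-1/p}\big)+\Big(\sum_{i=1}^N|\widehat c_i-c_i^\star|^2\Big)^{1/2}.
\]
The hypothesis on $\Delta_{\text{pole}}$ is thus not needed separately.

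The coefficient-error term must itself be $\mathcal O(N^{1/2-1/p})$ for the final rate to hold. Merely being $o(1)$ only gives an $o(1)$ total error. So I would strengthen that hypothesis accordingly, for instance by ensuring the quadrature refinement in Lemma~\ref{lem:parseval} is fine enough relative to $N$. With that change, the stated rate follows.
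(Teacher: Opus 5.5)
Your argument is correct for the statement with the hypotheses you add. Its core is the paper's: orthonormality reduces the error to an $\ell^2$ tail of TM coefficients, which you then compare with $\sum_{i>N}i^{-2/p}$. The paper's one-line proof, however, simply asserts that the error ``equals the $\ell^2$ tail of the rearranged coefficients.'' That is the formula for the nonlinear best-$N$-term error from a \emph{complete} orthonormal basis. The paper also never treats the second sentence of the corollary. You correctly saw that neither point carries over to the fixed first-$N$ infimum in (\ref{eq:rate-weak-lp}).

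All three of your amendments are necessary, not merely convenient, so the statement as written is false.
\begin{itemize}
\item \emph{The condition $s\in K_B$.} If $K_B\subsetneq\mathcal H$ and $0\neq s\perp K_B$, every $c_i^\star$ vanishes. The decay hypothesis then holds trivially, yet the infimum equals $\|s\|$.
\item \emph{The ordering.} Take $s=\sum_{k\ge1}k^{-1/p}\mathscr B_{2^k}\in K_B$. Its rearrangement is exactly $j^{-1/p}$. But the first-$N$ tail is $\sum_{k>\log_2 N}k^{-2/p}\asymp(\log N)^{1-2/p}$, so the rate is only logarithmic.
\item \emph{The coefficient error.} The two pieces in
\[
s-\sum_{i\le N}\widehat c_i\mathscr B_i=\Big(s-\sum_{i\le N}c_i^\star\mathscr B_i\Big)+\sum_{i\le N}(c_i^\star-\widehat c_i)\mathscr B_i
\]
are orthogonal. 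Hence the AMO error is exactly
\[
\Big(\inf_{c_{1:N}}\Big\|s-\sum_{i=1}^N c_i\mathscr B_i\Big\|^2+\sum_{i=1}^N|\widehat c_i-c_i^\star|^2\Big)^{1/2}.
\]
So requiring the coefficient error to be $\mathcal O(N^{1/2-1/p})$ is both necessary and sufficient, and $o(1)$ cannot suffice.
\end{itemize}

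Your remark that $E_N(s)+\Delta_{\text{pole}}(N)$ is, by definition, the fixed-pole infimum is also right; it makes the $\Delta_{\text{pole}}$ hypothesis redundant. Of your two fixes for the ordering issue, the pointwise bound $|c_i^\star|\le Ci^{-1/p}$ in the given order is the clean one. Reordering poles is not a without-loss-of-generality step, because it changes the TM functions and hence the coefficients themselves.
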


\begin{proof}
For an orthonormal system, the best $N$-term error equals the $\ell^2$ tail of the rearranged coefficients. With $|c_i^\star|^\ast \le C i^{-1/p}$ and $p<2$,
\[
\sum_{i>N} (|c_i^\star|^\ast)^2 \le C^2 \sum_{i>N} i^{-2/p}
=\mathcal O\!\big(N^{1-\frac{2}{p}}\big),
\]
hence the norm error (square root) is $\mathcal O(N^{\frac12-\frac1p})$.
\end{proof}

\subsection{Learning and discretization errors}

\begin{assumption}\label{assump:learn}
Each $\widehat c_i$ is obtained by ERM over $m$ i.i.d.\ frequency samples using a hypothesis class with effective capacity $d_{\mathrm{eff}}$ under sub-Gaussian noise, so that
\begin{equation}\label{eq:ERM-rate}
\mathbb E\big[|\widehat c_i-c_i^\star|\big]
=\mathcal O\!\Big(\sqrt{\tfrac{d_{\mathrm{eff}}}{m}}\Big).
\end{equation}
\end{assumption}

\begin{lemma}\label{lem:trap}
Let $\langle\cdot,\cdot\rangle_{\tilde N}$ be a discrete inner product (e.g., uniform frequency grid) that is a consistent quadrature for $\langle\cdot,\cdot\rangle_{\mathcal H}$ on the class generated by $\{s\}\cup\{\mathscr B_i\}$. Then there exists $\varepsilon_{\mathrm{disc}}(\tilde N)\downarrow 0$ such that
\begin{equation}\label{eq:disc-consistency}
\big|\langle f,g\rangle_{\mathcal H}-\langle f,g\rangle_{\tilde N}\big|
\le \varepsilon_{\mathrm{disc}}(\tilde N)\qquad
\text{for all } f\in\{s\},\, g\in\{\mathscr B_i\}_{i\ge1}.
\end{equation}
\end{lemma}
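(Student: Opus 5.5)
The plan is to read ``consistent quadrature on the class generated by $\{s\}\cup\{\mathscr B_i\}$'' as \emph{uniform} consistency over that class, and then define $\varepsilon_{\mathrm{disc}}$ directly as a supremum:
\[
\varepsilon_{\mathrm{disc}}(\tilde N):=\sup_{\tilde M\ge \tilde N}\ \sup_{i\ge1}\big|\langle s,\mathscr B_i\rangle_{\mathcal H}-\langle s,\mathscr B_i\rangle_{\tilde M}\big|.
\]
The bound \ref{eq:disc-consistency} then holds by construction, and taking the supremum over $\tilde M\ge\tilde N$ makes $\varepsilon_{\mathrm{disc}}$ nonincreasing. What remains is to show that $\varepsilon_{\mathrm{disc}}(\tilde N)$ is finite and tends to $0$.

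The first step is finiteness. By Cauchy--Schwarz and orthonormality (Assumption~\ref{assump:rkhs-structure}), $|\langle s,\mathscr B_i\rangle_{\mathcal H}|\le\|s\|_{\mathcal H}$. For the discrete term, I will use that the uniform-grid inner product is a normalized sum: $|\langle s,\mathscr B_i\rangle_{\tilde N}|\le \max_n|s[n]|\cdot\sup_{|z|=1}|\mathscr B_i(z)|$. On the circle each Blaschke factor has modulus one, so $|\mathscr B_i(z)|=\sqrt{1-|a_i|^2}/|1-\overline{a_i}z|\le \sqrt{(1+|a_i|)/(1-|a_i|)}$.

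The second step is convergence to $0$. Pointwise in $i$, consistency gives $\langle s,\mathscr B_i\rangle_{\tilde N}\to\langle s,\mathscr B_i\rangle_{\mathcal H}$. To upgrade this to uniformity in $i$, I will split $i\le I$ versus $i>I$. For the finitely many $i\le I$, convergence is uniform automatically. For the tail $i>I$, Bessel's inequality makes $|\langle s,\mathscr B_i\rangle_{\mathcal H}|$ small once $I$ is large. The discrete coefficients must also be controlled uniformly in that tail.

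That tail control is the main obstacle. If the poles approach $\partial\mathbb D$, then $\|\mathscr B_i\|_\infty$ on the circle blows up, and no fixed grid resolves all $\mathscr B_i$ uniformly. I would first try a discrete Bessel inequality: if the grid is fine enough that the $\mathscr B_i$ are nearly orthonormal in $\langle\cdot,\cdot\rangle_{\tilde N}$, then $\sum_i|\langle s,\mathscr B_i\rangle_{\tilde N}|^2\lesssim\|s\|_{\tilde N}^2$, which bounds the tail. If that fails, I would fall back on stating the uniformity explicitly as the meaning of ``consistent quadrature on the class,'' or on assuming the poles lie in a compact subdisk $|a_i|\le r<1$. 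Under the compact-subdisk assumption, the family $\{\mathscr B_i\}$ has uniformly bounded derivatives on the circle, so trapezoidal-rule error bounds apply uniformly in $i$.
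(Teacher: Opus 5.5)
You have located the right difficulty: the quadrature error must be uniform over the infinite family $\{\mathscr B_i\}_{i\ge1}$. Your supremum definition and the split into $i\le I$ and $i>I$ are fine as far as they go. However, neither of your proposed ways of controlling the discrete tail works, and the claim behind the compact-subdisk fallback is false.

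Take all poles $a_j=0$ (the Fourier case, which lies in a compact subdisk with $r=0$), so that $\mathscr B_i(z)=z^{i-1}$, and take $s\equiv1$. For the uniform-grid inner product, $\langle 1,z^{k}\rangle_{\tilde N}=1$ whenever $\tilde N\mid k$, while $\langle 1,z^{k}\rangle_{\mathcal H}=\delta_{k0}$. Hence
\[
\sup_{i\ge1}\big|\langle s,\mathscr B_i\rangle_{\mathcal H}-\langle s,\mathscr B_i\rangle_{\tilde N}\big|=1\qquad\text{for every }\tilde N .
\]
Yet for each fixed $i$ the rule is exact once $\tilde N\ge i$, so pointwise consistency holds.

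The derivatives of $\mathscr B_i$ on the circle are not uniformly bounded in $i$. For $|z|=1$, the angular derivative of $\arg\prod_{j<i}\frac{z-a_j}{1-\overline{a_j}z}$ equals $\sum_{j<i}\frac{1-|a_j|^2}{|z-a_j|^2}\ge(i-1)\frac{1-r}{1+r}$. So $\mathscr B_i$ oscillates ever faster and aliases on any fixed grid once $i$ is of order $\tilde N$. The obstruction is this accumulated Blaschke phase, not poles drifting to $\partial\mathbb D$.

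Your discrete Bessel route fails for the same reason. The inner product $\langle\cdot,\cdot\rangle_{\tilde N}$ lives on $\mathbb C^{\tilde N}$, so at most $\tilde N$ of the $\mathscr B_i$ can be nearly orthonormal in it. In the example, $\mathscr B_k$ and $\mathscr B_{k+\tilde N}$ coincide on the grid, and indeed $\sum_i|\langle s,\mathscr B_i\rangle_{\tilde N}|^2=\infty$.

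So under the natural pointwise reading of ``consistent quadrature'' the lemma is false, and no argument can close your tail step. The paper's own proof does not resolve this either. It appeals to continuity of point evaluations, which concerns points of $\mathbb D$, not grid points on the circle. It also appeals to ``standard quadrature consistency,'' which is a statement about each fixed pair $(f,g)$.

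The lemma survives only in one of two weakened forms:
\begin{enumerate}
\item Uniform consistency over $\{\mathscr B_i\}_{i\ge1}$ is made part of the hypothesis. Then your supremum definition proves the lemma at once, and the finiteness and splitting steps are superfluous.
\item The functions $g$ are restricted to $\{\mathscr B_i\}_{i\le N}$. Then your finite-$I$ argument gives $\varepsilon_{\mathrm{disc}}(\tilde N;N):=\sup_{\tilde M\ge\tilde N}\max_{i\le N}\big|\langle s,\mathscr B_i\rangle_{\mathcal H}-\langle s,\mathscr B_i\rangle_{\tilde M}\big|\to0$.
\end{enumerate}
The second form is all that Theorem~\ref{thm:full-err} actually uses. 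The price is that $\tilde N$ must grow with $N$ in Theorem~\ref{thm:final-conv}.
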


\begin{proof}
Since point evaluations are continuous linear functionals in an RKHS and the involved functions are continuous on compact subsets, standard quadrature consistency yields \ref{eq:disc-consistency}. (If $f,g$ are analytic in an annulus around the unit circle, one gets exponential rates; under Sobolev regularity, algebraic rates.)
\end{proof}

\begin{theorem}\label{thm:full-err}
Under Assumptions~\ref{assump:rkhs-structure} and \ref{assump:learn} and Lemma~\ref{lem:trap}, the AMO output after $N$ blocks and $\tilde N$ grid points satisfies
\begin{equation}\label{eq:full-err}
\|u^\star-\hat u_{N,\theta}\|
\;\le\;
L_{\mathcal Q}\Big(E_N(s)+\Delta_{\text{pole}}(N)+\Big(\sum_{i=1}^N |\widehat c_i-c_i^\star|^2\Big)^{\!1/2}\Big)
\;+\;
\varepsilon_{\mathrm{disc}}(\tilde N),
\end{equation}
with $\mathbb E[|\widehat c_i-c_i^\star|]=\mathcal O(\sqrt{d_{\mathrm{eff}}/m})$ and $\varepsilon_{\mathrm{disc}}(\tilde N)\to 0$ as $\tilde N\to\infty$.
\end{theorem}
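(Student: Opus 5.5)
The bound in Theorem~\ref{thm:full-err} is a composition of three earlier results: the Lipschitz decoder step used in Theorem~\ref{thm:projection}, the latent error decomposition of Theorem~\ref{thm:error-decomp}, and the quadrature consistency of Lemma~\ref{lem:trap}. The only new work is to separate cleanly the error made in the latent space $\mathcal H$ from the error caused by computing coefficients with the discrete inner product $\langle\cdot,\cdot\rangle_{\tilde N}$ instead of $\langle\cdot,\cdot\rangle_{\mathcal H}$.

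\textbf{Step 1: set up an idealized output.} Let $\widehat c_i$ denote the learned coefficients as they would be with the exact $\mathcal H$ inner product. Lemma~\ref{lem:aggregation} gives the idealized latent output
\[
z_{N+1}=\sum_{i=1}^N \widehat c_i\,\mathscr B_i(\cdot;\tilde a_{1:i}).
\]
Let $\widehat c_i^{\,\tilde N}$ denote the coefficients actually computed on the grid, with aggregated output $z_{N+1}^{\tilde N}$, so that $\hat u_{N,\theta}=\mathcal Q(z_{N+1}^{\tilde N})$. The triangle inequality then gives
\[
\|u^\star-\hat u_{N,\theta}\|\le \|\mathcal Q(s)-\mathcal Q(z_{N+1})\|+\|\mathcal Q(z_{N+1})-\mathcal Q(z_{N+1}^{\tilde N})\|.
\]

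\textbf{Step 2: bound the first term.} By Lipschitz continuity of $\mathcal Q$, the first term is at most $L_{\mathcal Q}\|s-z_{N+1}\|_{\mathcal H}$. Applying Theorem~\ref{thm:error-decomp} verbatim bounds this by $L_{\mathcal Q}$ times $E_N(s)+\Delta_{\text{pole}}(N)+\big(\sum_{i\le N}|\widehat c_i-c_i^\star|^2\big)^{1/2}$. The ERM rate for each $|\widehat c_i-c_i^\star|$ is then just quoted from Assumption~\ref{assump:learn}.

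\textbf{Step 3: bound the second term.} By Lemma~\ref{lem:trap}, each coefficient perturbation satisfies $|\widehat c_i^{\,\tilde N}-\widehat c_i|\le\varepsilon_{\mathrm{disc}}(\tilde N)$. This holds when the coefficient extracted by the SSM is the zero-lag correlation $\langle z_i,\mathscr B_i\rangle$, as described in Lemma~\ref{lem:parseval}. Orthonormality of the TM system then gives
\[
\|z_{N+1}-z_{N+1}^{\tilde N}\|_{\mathcal H}\le \sqrt N\,\varepsilon_{\mathrm{disc}}(\tilde N),
\]
so the second term is at most $L_{\mathcal Q}\sqrt N\,\varepsilon_{\mathrm{disc}}(\tilde N)$.

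\textbf{Main obstacle.} The statement displays a bare $\varepsilon_{\mathrm{disc}}(\tilde N)$, with no factor $L_{\mathcal Q}\sqrt N$. I would resolve this by \emph{redefining} the discretization error term as the total discretization contribution $L_{\mathcal Q}\|z_{N+1}-z_{N+1}^{\tilde N}\|_{\mathcal H}$. For fixed $N$ this still tends to $0$ as $\tilde N\to\infty$, which is all the theorem asserts about it. I would state this absorption explicitly.

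A second, related subtlety is that Lemma~\ref{lem:trap} is stated only for $f\in\{s\}$, whereas the layers actually correlate the intermediate tokens $z_i$. I would handle this by interpreting the coefficients as correlations with $s$, consistent with Theorem~\ref{thm:error-decomp}, or by assuming the quadrature class also contains the $z_i$. With those conventions fixed, combining Steps 1–3 yields the displayed inequality of Theorem~\ref{thm:full-err}.
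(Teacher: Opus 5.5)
Your skeleton is the paper's own. The paper's proof applies Theorem~\ref{thm:error-decomp} to the latent error, transfers it through the Lipschitz decoder, and then simply asserts that ``the discretization error adds $\varepsilon_{\mathrm{disc}}(\tilde N)$.'' Your Steps 1--3 make that last clause explicit, and your computation is right. If the quadrature perturbs each of the $N$ coefficients by at most $\varepsilon_{\mathrm{disc}}(\tilde N)$, what reaches the output is $L_{\mathcal Q}\sqrt N\,\varepsilon_{\mathrm{disc}}(\tilde N)$. Neither Lemma~\ref{lem:trap} (which gives only a per-pair bound, not $\ell^2$ control over all coefficients) nor the paper's proof removes that factor. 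So the obstacle you flagged is a looseness in the paper, not in your reasoning.

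That said, redefining $\varepsilon_{\mathrm{disc}}$ changes the statement: the term becomes $N$-dependent and is no longer the uniform quadrature constant of Lemma~\ref{lem:trap}. Step 3 also has an unflagged hole under your second convention. The idealized and computed pipelines feed different tokens $z_i$ and $z_i^{\tilde N}$ into block $i$, so $\widehat c_i^{\,\tilde N}-\widehat c_i$ contains the propagated term $\langle z_i^{\tilde N}-z_i,\mathscr B_i\rangle_{\tilde N}$, which Lemma~\ref{lem:trap} does not control. The errors then compound across layers rather than staying at $\varepsilon_{\mathrm{disc}}$ each. Under your first convention the step is fine, but then the idealized $\widehat c_i$ is just $c_i^\star$ and the learning term is vacuous.

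Both problems disappear if you let $\widehat c_i$ denote the coefficients the network actually aggregates, quadrature error included. Theorem~\ref{thm:error-decomp} holds for an arbitrary coefficient vector: $E_N(s)+\Delta_{\text{pole}}(N)$ is by definition exactly $\|s-\sum_i c_i^\star\mathscr B_i(\cdot;\tilde a_{1:i})\|_{\mathcal H}$, and orthonormality handles the remaining term. So Lemma~\ref{lem:aggregation} plus the Lipschitz bound give the displayed inequality verbatim, with the added $\varepsilon_{\mathrm{disc}}(\tilde N)\ge 0$ as pure slack. The price is that discretization error is then hidden inside $|\widehat c_i-c_i^\star|$ and must be covered by Assumption~\ref{assump:learn}. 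If you want it displayed as a separate term, your $L_{\mathcal Q}\sqrt N\,\varepsilon_{\mathrm{disc}}(\tilde N)$ is the honest form, rather than a bare $\varepsilon_{\mathrm{disc}}(\tilde N)$.
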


\begin{proof}
Apply Theorem~\ref{thm:error-decomp} to bound the latent $\mathcal H$-error. Then use Lipschitz continuity of $\mathcal Q$ to transfer the bound to the output space. The discretization error adds $\varepsilon_{\mathrm{disc}}(\tilde N)$ due to \ref{eq:disc-consistency}.
\end{proof}

\subsection{Stability to pole perturbations}

\begin{lemma}\label{lem:lipschitz-factors}
For $a,b\in\mathbb D$ and $z\in\mathbb D$,
\begin{align}
\Big|\frac{1}{1-\overline a z}-\frac{1}{1-\overline b z}\Big|
&\le \frac{|a-b|}{(1-|a|)(1-|b|)},\label{eq:denom-diff}\\[2pt]
\Big|\sqrt{1-|a|^2}-\sqrt{1-|b|^2}\Big|
&\le \frac{|a-b|}{\sqrt{1-\max\{|a|,|b|\}^2}},\label{eq:sqrt-diff}
\end{align}
and for $F(z;a)=\dfrac{z-a}{1-\overline a z}$,
\begin{equation}\label{eq:F-diff}
|F(z;a)-F(z;b)|
\le \frac{4\,|a-b|}{(1-|a|)(1-|b|)},\qquad |F(z;a)|\le 1.
\end{equation}
\end{lemma}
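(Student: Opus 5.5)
The plan is to prove each of the three estimates by direct algebra: put the difference over a common denominator, bound the numerator by a multiple of $|a-b|$, and bound the denominators from below using $|z|<1$. The one tool I will reuse throughout is the elementary lower bound
\[
|1-\overline a z|\;\ge\;1-|a|\,|z|\;\ge\;1-|a|\qquad(a,z\in\mathbb D),
\]
and likewise with $b$ in place of $a$.

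For \ref{eq:denom-diff}, I will first compute the difference exactly:
\[
\frac{1}{1-\overline a z}-\frac{1}{1-\overline b z}=\frac{(\overline a-\overline b)\,z}{(1-\overline a z)(1-\overline b z)}.
\]
Since $|z|<1$ and $|\overline a-\overline b|=|a-b|$, the numerator has modulus at most $|a-b|$. The two denominator factors are bounded below by $1-|a|$ and $1-|b|$, which gives \ref{eq:denom-diff} at once.

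For \ref{eq:sqrt-diff}, set $x=\sqrt{1-|a|^2}$ and $y=\sqrt{1-|b|^2}$, both positive since $|a|,|b|<1$. I will write
\[
|x-y|=\frac{|x^2-y^2|}{x+y}=\frac{\bigl||b|-|a|\bigr|\,(|a|+|b|)}{x+y}.
\]
Then I bound the pieces separately:
\begin{itemize}
\item $\bigl||b|-|a|\bigr|\le|a-b|$ by the reverse triangle inequality;
\item $|a|+|b|\le 2$;
\item $x,y\ge\sqrt{1-m^2}$ with $m=\max\{|a|,|b|\}$, so $x+y\ge 2\sqrt{1-m^2}$.
\end{itemize}
The factors of $2$ cancel, which yields \ref{eq:sqrt-diff}.

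For \ref{eq:F-diff}, the common denominator is $(1-\overline a z)(1-\overline b z)$, bounded below by $(1-|a|)(1-|b|)$ as above. Expanding the numerator should give
\[
(z-a)(1-\overline b z)-(z-b)(1-\overline a z)=(b-a)+(\overline a-\overline b)z^2+(a\overline b-\overline a b)z.
\]
The first two terms each have modulus at most $|a-b|$. For the cross term I will use the decomposition
\[
a\overline b-\overline a b=a(\overline b-\overline a)+\overline a(a-b),
\]
which has modulus at most $2|a-b|$ because $|a|<1$. Adding these, the numerator is bounded by $4|a-b|$, which is the claimed Lipschitz bound.

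The bound $|F(z;a)|\le 1$ follows from the standard Blaschke-factor identity
\[
|1-\overline a z|^2-|z-a|^2=(1-|a|^2)(1-|z|^2)\ge 0.
\]

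None of these steps is a real obstacle. The only place needing care is the numerator expansion in \ref{eq:F-diff}: the cross term $a\overline b-\overline a b$ has to be split as above to get the constant $4$ rather than a bound with a factor $1/(1-|a|)$. I will verify that expansion term by term before assembling the final estimate.
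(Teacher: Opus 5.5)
Your proof is correct and follows essentially the same route as the paper's: common denominators plus the lower bound $|1-\overline a z|\ge 1-|a|$ for the first and third estimates. You also make explicit two things the paper leaves implicit, namely the constant $4$ (via your split of $a\overline b-\overline a b$) and the bound $|F(z;a)|\le 1$, which the paper only asserts. The one cosmetic difference is the square-root estimate: the paper applies the mean-value theorem to $x\mapsto\sqrt{1-x}$, while you rationalize via $|x-y|=|x^2-y^2|/(x+y)$, and both give the same bound.
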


\begin{proof}
For \ref{eq:denom-diff},
\[
\frac{1}{1-\overline a z}-\frac{1}{1-\overline b z}
=\frac{(\overline a-\overline b)z}{(1-\overline a z)(1-\overline b z)},
\]
and $|1-\overline a z|\ge 1-|a||z|\ge 1-|a|$, $|z|\le1$, yielding the bound.  
For \ref{eq:sqrt-diff}, use the mean-value theorem on $x\mapsto \sqrt{1-x}$ with $x=|a|^2,|b|^2$ and $||a|^2-|b|^2|\le |a-b|(|a|+|b|)\le 2|a-b|$.  
For \ref{eq:F-diff}, expand
\[
F(z;a)-F(z;b)=\frac{(b-a)+(\overline a-\overline b)z^2+(a\overline b-b\overline a)z}{(1-\overline a z)(1-\overline b z)},
\]
and bound the numerator by $C|a-b|$ for $|z|\le1$, while the denominator is bounded below by $(1-|a|)(1-|b|)$.
\end{proof}

\begin{theorem}\label{thm:stability-poles}
Let $a_{1:i},\tilde a_{1:i}\in\mathbb D$ with $|\tilde a_j-a_j|\le \delta_j$. Then there exist constants $C_i>0$ (depending on $a_{1:i}$) such that
\begin{equation}\label{eq:Bi-stability}
\|\mathscr B_i(\cdot;\tilde a_{1:i})-\mathscr B_i(\cdot;a_{1:i})\|_{\mathcal H}
\;\le\;
C_i\,\sum_{j=1}^i \frac{\delta_j}{1-|a_j|}.
\end{equation}
Consequently, for any coefficients $\widehat c_i$,
\begin{equation}\label{eq:sum-stability}
\Big\|\sum_{i=1}^N \widehat c_i\,\mathscr B_i(\cdot;\tilde a_{1:i})-\sum_{i=1}^N \widehat c_i\,\mathscr B_i(\cdot;a_{1:i})\Big\|_{\mathcal H}
\;\le\;
\Big(\sum_{i=1}^N |\widehat c_i|\,C_i\Big)\,\Big(\sum_{j=1}^N \frac{\delta_j}{1-|a_j|}\Big).
\end{equation}
\end{theorem}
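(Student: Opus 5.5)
The plan is a telescoping (hybrid) argument over the factors of $\mathscr B_i$, controlled pointwise on the closed disk by Lemma~\ref{lem:lipschitz-factors}. Afterwards, the pointwise bound is converted into an $\mathcal H$-norm bound.

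Write $\mathscr B_i(z;a_{1:i}) = \sqrt{1-|a_i|^2}\,\frac{1}{1-\overline{a_i}z}\prod_{j<i}F(z;a_j)$. This is a product of $i+1$ factors: the normalization $\sqrt{1-|a_i|^2}$, the Cauchy factor $(1-\overline{a_i}z)^{-1}$, and the Blaschke factors $F(z;a_j)$. Replace $a$ by $\tilde a$ one factor at a time, using $\prod_k x_k - \prod_k y_k = \sum_k x_1\cdots x_{k-1}(x_k-y_k)y_{k+1}\cdots y_m$. In each summand the unchanged factors are bounded uniformly on $|z|\le 1$. The Blaschke factors satisfy $|F|\le 1$. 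The normalization factors are at most $1$. The Cauchy factors are bounded by $(1-|a_i|)^{-1}$ or $(1-|\tilde a_i|)^{-1}$.

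The differences are controlled as follows. The normalization difference is handled by \eqref{eq:sqrt-diff}. The Cauchy difference is handled by \eqref{eq:denom-diff}, giving $\delta_i/((1-|a_i|)(1-|\tilde a_i|))$. Each Blaschke difference is handled by \eqref{eq:F-diff}, giving $4\delta_j/((1-|a_j|)(1-|\tilde a_j|))$. Every summand thus takes the form (constant depending on the poles) $\times\,\delta_j/(1-|a_j|)$. The extra factors $(1-|\tilde a_j|)^{-1}$, $(1-|a_i|)^{-1}$ and $(1-\max\{|a_i|,|\tilde a_i|\}^2)^{-1/2}$ are absorbed into $C_i$.

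\textbf{Main obstacle: the perturbed poles in the constant.} The statement asserts that $C_i$ depends only on $a_{1:i}$, yet $1-|\tilde a_j|$ appears. I would first restrict to perturbations with $\delta_j\le \tfrac12(1-|a_j|)$. Then $1-|\tilde a_j|\ge \tfrac12(1-|a_j|)$, so all such factors are bounded in terms of $a_{1:i}$ alone. For larger $\delta_j$ the claim is trivial by adjusting $C_i$. Indeed, $\|\mathscr B_i(\cdot;\tilde a)\|=\|\mathscr B_i(\cdot;a)\|=1$ by Assumption~\ref{assump:rkhs-structure}, so the left side is at most $2$. Meanwhile the right side is at least $C_i\cdot\tfrac12$, and we take $C_i\ge 4$. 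This yields a uniform sup-norm bound
\[
\sup_{|z|\le1}\bigl|\mathscr B_i(z;\tilde a_{1:i})-\mathscr B_i(z;a_{1:i})\bigr|\le C_i'\sum_{j=1}^i\frac{\delta_j}{1-|a_j|}.
\]

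To pass to $\|\cdot\|_{\mathcal H}$, I would use that the relevant RKHS norm is the Hardy $H^2$ norm, with boundary values on the circle. The functions involved are analytic on a neighborhood of the closed disk, since their poles $1/\overline{a_j}$ lie outside. Hence the $\mathcal H$-norm is dominated by the sup norm on $|z|=1$, which gives \eqref{eq:Bi-stability}. This identification is not fully explicit in Assumption~\ref{assump:rkhs-structure}, so I would state it as the standing interpretation of $\mathcal H$. If needed, it can be replaced by an assumed embedding constant $\|f\|_{\mathcal H}\le C_{\mathcal H}\sup_{\overline{\mathbb D}}|f|$, which is absorbed into $C_i$.

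Finally, \eqref{eq:sum-stability} follows from the triangle inequality. Sum $|\widehat c_i|\,\|\mathscr B_i(\cdot;\tilde a)-\mathscr B_i(\cdot;a)\|$ and apply \eqref{eq:Bi-stability}. Then enlarge each inner sum $\sum_{j\le i}$ to $\sum_{j\le N}$, which factors the bound as $\bigl(\sum_i|\widehat c_i|C_i\bigr)\bigl(\sum_{j\le N}\delta_j/(1-|a_j|)\bigr)$.
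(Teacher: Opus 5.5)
Your proposal is correct and follows essentially the paper's proof: the same telescoping over the factors of $\mathscr B_i$, sup-norm bounds on the closed disk from Lemma~\ref{lem:lipschitz-factors}, an embedding $\|f\|_{\mathcal H}\le C\|f\|_\infty$, and the triangle inequality for (\ref{eq:sum-stability}). Your reduction to $\delta_j\le\tfrac12(1-|a_j|)$, with the trivial bound $2$ otherwise, is more careful than the paper, which drops the prefix factor $\|e_{\tilde a_i}\|_\infty$ and lets $(1-|\tilde a_j|)^{-1}$ leak into $C_i$; and your $H^2$ reading needs no extra hypothesis, since the kernel $1/(1-\overline{a}z)$ in Assumption~\ref{assump:rkhs-structure} already forces $\mathcal H=H^2(\mathbb D)$ isometrically.
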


\begin{proof}
Write
\[
\mathscr B_i(\cdot;a_{1:i})=e_{a_i}\prod_{j=1}^{i-1}F(\cdot;a_j),
\qquad
\mathscr B_i(\cdot;\tilde a_{1:i})=e_{\tilde a_i}\prod_{j=1}^{i-1}F(\cdot;\tilde a_j).
\]
Use the product telescoping identity
\[
\prod_{k=1}^{i}P_k-\prod_{k=1}^{i}Q_k
=\sum_{k=1}^{i}\Big(\prod_{j<k}P_j\Big)(P_k-Q_k)\Big(\prod_{j>k}Q_j\Big),
\]
with $P_1=e_{\tilde a_i}$, $Q_1=e_{a_i}$, and $P_{k}=F(\cdot;\tilde a_{k-1})$, $Q_{k}=F(\cdot;a_{k-1})$ for $k\ge2$. Taking sup-norms on $\mathbb D$ and using $|F(\cdot;a)|\le 1$,
\[
\|\mathscr B_i(\cdot;\tilde a_{1:i})-\mathscr B_i(\cdot;a_{1:i})\|_\infty
\;\le\;
\|e_{\tilde a_i}-e_{a_i}\|_\infty
+\sum_{j=1}^{i-1}\|F(\cdot;\tilde a_j)-F(\cdot;a_j)\|_\infty.
\]
Apply Lemma~\ref{lem:lipschitz-factors} to bound each term by a constant times $\delta_j/(1-|a_j|)$. Since evaluation functionals are continuous and the kernel is bounded on compact subsets, there exists an embedding constant $C_{\mathrm{emb}}$ with
$\|f\|_{\mathcal H}\le C_{\mathrm{emb}}\|f\|_\infty$ on the set considered; thus \ref{eq:Bi-stability} follows with $C_i$ absorbing all constants. Finally,
\[
\Big\|\sum_{i=1}^N \widehat c_i\big(\mathscr B_i(\cdot;\tilde a_{1:i})-\mathscr B_i(\cdot;a_{1:i})\big)\Big\|_{\mathcal H}
\le \sum_{i=1}^N |\widehat c_i|\,
\|\mathscr B_i(\cdot;\tilde a_{1:i})-\mathscr B_i(\cdot;a_{1:i})\|_{\mathcal H},
\]
giving \ref{eq:sum-stability}.
\end{proof}

\subsection{End-to-end convergence without greedy selection}

\begin{theorem}\label{thm:final-conv}
Assume:
\begin{enumerate}\itemsep2pt
\item $s\in K_B$;
\item $\sum_{i=1}^\infty \mathbb E[|\widehat c_i-c_i^\star|^2]^{1/2}<\infty$ (as sample size $m\to\infty$ and model capacity increase);
\item $\varepsilon_{\mathrm{disc}}(\tilde N)\to 0$ as $\tilde N\to\infty$.
\end{enumerate}
Then
\[
\lim_{N\to\infty}\|u^\star-\hat u_{N,\theta}\|=0.
\]
\end{theorem}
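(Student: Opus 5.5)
The plan is to combine the full error bound of Theorem~\ref{thm:full-err} with the projection result of Theorem~\ref{thm:projection}, and then let each of the three error sources vanish in turn. The most convenient route avoids $E_N(s)$ and $\Delta_{\text{pole}}(N)$. We work directly with the MLP-produced poles, which define $K_B$ as in Assumption~\ref{assump:rkhs-structure}. By Lemma~\ref{lem:aggregation}, the latent output is $z_{N+1}=\sum_{i=1}^N\widehat c_i\mathscr B_i$. Adding and subtracting $s_N=\sum_{i=1}^N c_i^\star\mathscr B_i$ and using the Lipschitz property of $\mathcal Q$ gives
\[
\|u^\star-\hat u_{N,\theta}\|\le L_{\mathcal Q}\|s-s_N\|_{\mathcal H}+L_{\mathcal Q}\Big\|\sum_{i=1}^N(\widehat c_i-c_i^\star)\mathscr B_i\Big\|_{\mathcal H}+\varepsilon_{\mathrm{disc}}(\tilde N).
\]
The discretization term enters exactly as in the proof of Theorem~\ref{thm:full-err}, via Lemma~\ref{lem:trap}.

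\textbf{First term.} Since $s\in K_B$ by hypothesis 1, we have $\Pi_{K_B}s=s$ and $\mathrm{dist}(s,K_B)=0$. Theorem~\ref{thm:projection} then gives $\|s-s_N\|_{\mathcal H}\to0$. Equivalently, by Parseval in the orthonormal basis $\{\mathscr B_i\}$ of $K_B$, this term equals $(\sum_{i>N}|c_i^\star|^2)^{1/2}$, which is the tail of a convergent series.

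\textbf{Second term.} Orthonormality gives $\|\sum_{i\le N}(\widehat c_i-c_i^\star)\mathscr B_i\|=(\sum_{i\le N}|\widehat c_i-c_i^\star|^2)^{1/2}$. Hypothesis 2 only controls this in expectation, so the claimed limit should be read either in $L^2(\mathbb P)$ or almost surely. By Minkowski's inequality in $L^2(\mathbb P)$,
\[
\mathbb E\Big[\sum_{i\le N}|\widehat c_i-c_i^\star|^2\Big]^{1/2}\le\Big(\sum_{i}\mathbb E|\widehat c_i-c_i^\star|^2\Big)^{1/2}\le\sum_i\mathbb E\big[|\widehat c_i-c_i^\star|^2\big]^{1/2}<\infty,
\]
so the error sequence is uniformly bounded, and the whole series $\sum_i(\widehat c_i-c_i^\star)\mathscr B_i$ converges in $L^2(\mathbb P;\mathcal H)$ and almost surely.

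\textbf{Main obstacle.} Summability alone does not make the second term vanish; it only makes it bounded. To obtain a genuine limit of zero, I would use the parenthetical in hypothesis 2: the series bound holds along a sequence in which $m$ and the model capacity grow, and Assumption~\ref{assump:learn} gives $\mathbb E|\widehat c_i-c_i^\star|\to0$ for each fixed $i$. I would then use dominated convergence on the series, with the summable majorant supplied by hypothesis 2, to show that $\sum_i\mathbb E[|\widehat c_i-c_i^\star|^2]^{1/2}\to0$ along the joint limit in which $N$, $m$ and $\tilde N$ tend to infinity together. This joint-limit coupling, together with stating the result in expectation, is the step that needs care. The other two terms vanish without difficulty, by hypothesis 1 through Theorem~\ref{thm:projection} and by hypothesis 3.
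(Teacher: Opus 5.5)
\textbf{The gap is in your ``main obstacle'' step, and it cannot be closed from the hypotheses as stated.}

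Your repair has three problems.
\begin{enumerate}
\item Dominated convergence over $i$ needs a single sequence $g_i$ with $\sum_i g_i<\infty$ that bounds $\mathbb E[|\widehat c_i-c_i^\star|^2]^{1/2}$ for \emph{every} estimator along the training sequence. Hypothesis 2 only says that each such sum is finite, so it supplies no uniform majorant.
\item The termwise limit is of the wrong kind. Assumption~\ref{assump:learn} controls only the first moment $\mathbb E|\widehat c_i-c_i^\star|$, and convergence in $L^1(\mathbb P)$ does not give $\mathbb E[|\widehat c_i-c_i^\star|^2]^{1/2}\to0$.
\item The rate $\sqrt{d_{\mathrm{eff}}/m}$ vanishes only if $d_{\mathrm{eff}}/m\to0$, which ``capacity increases'' does not ensure.
\end{enumerate}

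More fundamentally, the statement as written is false. Take $\mathcal Q$ the identity, any $s\in K_B$, $\widehat c_1=c_1^\star+\eta$ with $\eta\neq0$, and $\widehat c_i=c_i^\star$ for $i\ge2$. Hypotheses 1--3 all hold. Yet by orthonormality $\|u^\star-\hat u_{N,\theta}\|^2=\|s-z_{N+1}\|_{\mathcal H}^2=|\eta|^2+\sum_{i>N}|c_i^\star|^2\to|\eta|^2$. Also, $\varepsilon_{\mathrm{disc}}(\tilde N)$ does not depend on $N$, so hypothesis 3 is usable only if $\tilde N\to\infty$ jointly with $N$.

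The paper's proof has exactly this hole. It invokes Theorem~\ref{thm:full-err} and asserts $E_N(s)=\Delta_{\text{pole}}(N)=0$. That is not true in general for finite $N$; only their sum tends to $0$. That sum equals $\inf_{c_{1:N}}\|s-\sum_{i\le N}c_i\mathscr B_i\|_{\mathcal H}=\|s-s_N\|_{\mathcal H}$, which is exactly your first term. The paper then just says ``using (2) and (3)'', at precisely the point where you correctly observed that (2) gives boundedness, not decay.

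Your first-term argument and your diagnosis are sound, and sharper than the paper's. What can actually be proved is a modified joint-limit statement in expectation:
\begin{enumerate}
\item Hold the poles fixed, so $c_i^\star$ and $K_B$ do not move with training, and index the estimators by $k$.
\item Assume $\sup_k\mathbb E|\widehat c^{(k)}_i-c_i^\star|\le g_i$ with $\sum_i g_i<\infty$ (for example, a uniform-in-$k$ version of hypothesis 2), and $d_{\mathrm{eff},k}/m_k\to0$.
\item Let $N_k,\tilde N_k\to\infty$ together.
\end{enumerate}
Since $\|\sum_{i\le N}(\widehat c_i-c_i^\star)\mathscr B_i\|_{\mathcal H}\le\sum_i|\widehat c_i-c_i^\star|$, dominated convergence applied to first moments gives $\mathbb E\|u^\star-\hat u_{N_k,\theta}\|\le L_{\mathcal Q}(\|s-s_{N_k}\|_{\mathcal H}+\sum_i\mathbb E|\widehat c^{(k)}_i-c_i^\star|)+\varepsilon_{\mathrm{disc}}(\tilde N_k)\to0$. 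Working with first moments, rather than with $\mathbb E[|\cdot|^2]^{1/2}$ as you proposed, is what lets Assumption~\ref{assump:learn} supply the termwise limits.
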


\begin{proof}
Since $s\in K_B$ and $\{\mathscr B_i\}$ is an ONB of $K_B$, Theorem~\ref{thm:projection} gives $s_N\to s$ in $\mathcal H$. In \ref{eq:full-err}, for this fixed pole sequence one has $E_N(s)=\Delta_{\text{pole}}(N)=0$. Using (2) and (3), we obtain $\|u^\star-\hat u_{N,\theta}\|\to 0$.
\end{proof}

\subsection{Connection of SSM to correlation and AMO output}

\begin{proposition}\label{prop:ssm-equals-corr}
With $H_i(e^{i\omega})=\overline{\mathscr B_i(e^{i\omega})}$, the $i$-th SSM block computes $\widehat c_i\approx\langle z_i,\mathscr B_i\rangle_{\mathcal H}$. Hence, by Lemma~\ref{lem:aggregation}, after $N$ blocks
\begin{equation}\label{eq:znplus}
z_{N+1}=\sum_{i=1}^N \widehat c_i\,\mathscr B_i,
\qquad
\hat u_{N,\theta}=\mathcal Q(z_{N+1}).
\end{equation}
\end{proposition}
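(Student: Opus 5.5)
The plan is to split Proposition~\ref{prop:ssm-equals-corr} into two parts: a signal-processing identity showing that the SSM block outputs a correlation coefficient, and a bookkeeping step that reuses Lemma~\ref{lem:aggregation}. The only analytic content is the first part, and the approximation sign is supplied by Lemma~\ref{lem:parseval} together with Lemma~\ref{lem:trap}.

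\textbf{Step 1: the block output is a discrete inner product.}
Fix $i$ and regard the SSM block as a linear time-invariant filter.
\begin{itemize}
\item Its impulse response is $h_i[n]=\frac{1}{2\pi}\int_0^{2\pi}\overline{\mathscr B_i(e^{i\omega})}e^{i\omega n}\,d\omega$, so its transfer function is exactly $H_i=\overline{\mathscr B_i}$ on the unit circle.
\item By the convolution theorem, the output spectrum is $Y_i=\overline{\mathscr B_i}\,Z_i$.
\item On the length-$M$ grid, the DFT version of this identity gives the time-domain output $\hat z_{i+1}[\ell]=\sum_{n}z_i[n]\overline{\mathscr B_i(e^{i2\pi(n-\ell)/M})}$.
\item Evaluating at zero lag and inserting the normalization $1/M$ (which can be absorbed into the block's output scaling) yields $\hat z_{i+1}[0]=\langle z_i,\mathscr B_i\rangle_{M}$. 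This is the discrete inner product defined in Section~\ref{AMO}.
\end{itemize}
I define $\widehat c_i:=\hat z_{i+1}[0]$.

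\textbf{Step 2: discrete versus continuous inner product.}
Apply Lemma~\ref{lem:trap} with $f=z_i$ and $g=\mathscr B_i$. This gives
\[
\big|\widehat c_i-\langle z_i,\mathscr B_i\rangle_{\mathcal H}\big|\le\varepsilon_{\mathrm{disc}}(M).
\]
Lemma~\ref{lem:parseval} then turns this into $\widehat c_i\to\langle z_i,\mathscr B_i\rangle_{\mathcal H}$ as the grid is refined. This is the precise meaning I will give to ``$\approx$''.

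\textbf{Step 3: aggregation.}
The aggregation layer in \eqref{eqn_aggre} sets $z_{i+1}=z_i+\widehat c_i\mathscr B_i$. For $i=1$ it sets $z_2=\widehat c_1\mathscr B_1$, which agrees with \eqref{eq:agg-update} under the convention $z_1:=0$ on the additive skip path. So the recursion is exactly \eqref{eq:agg-update}, and Lemma~\ref{lem:aggregation} gives $z_{N+1}=\sum_{i=1}^N\widehat c_i\mathscr B_i$ directly. Applying the projection $\mathcal Q$ then yields $\hat u_{N,\theta}=\mathcal Q(z_{N+1})$ by the definition of the output map.

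\textbf{Main obstacle.}
The hardest part is Step 2, specifically the class on which quadrature consistency holds. Lemma~\ref{lem:trap} is stated for the fixed target $s$, whereas the block correlates the evolving token $z_i$.

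My first attempt will be to check that each $z_i$ is a finite combination of TM functions plus the initial RKHS embedding, and is therefore analytic in a neighborhood of the closed disk. Then the trapezoidal rule on the uniform grid converges exponentially. This extends the consistency class to $\{z_i\}\cup\{\mathscr B_i\}$, and Step 2 goes through with the same $\varepsilon_{\mathrm{disc}}$.

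A secondary point to handle is the mismatch between the $z_1$ produced by $\mathcal R$ and the convention $z_1=0$. I would resolve it by treating $\mathcal R(z_0)$ as the signal being decomposed and starting the accumulator at zero, exactly as the $i=1$ case of \eqref{eqn_aggre} prescribes.
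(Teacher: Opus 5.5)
Your proposal is essentially the paper's proof: the coefficient claim is Lemma~\ref{lem:parseval} (backed by Lemma~\ref{lem:trap}) applied to $z_i$ in place of $s$, the expansion of $z_{N+1}$ is Lemma~\ref{lem:aggregation}, and $\hat u_{N,\theta}=\mathcal Q(z_{N+1})$ is the definition of the output map, with your handling of the $1/M$ normalization and of the $z_1$ convention adding care the paper omits. One correction to your fix for the quadrature class: for $i\ge 2$ the block inputs $z_i=\sum_{k<i}\widehat c_k\mathscr B_k$ are rational with poles outside the closed disk and so are analytic across $|z|=1$, but the block-1 input $\mathcal R(z_0)$ is an arbitrary element of $\mathcal H$, so its membership in the consistent-quadrature class must be assumed (as the paper tacitly assumes for $s$) rather than derived.
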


\begin{proof}
The coefficient claim follows from Lemma~\ref{lem:parseval} applied to $z_i$ in place of $s$. The aggregation identity is Lemma~\ref{lem:aggregation}. The last equality is the definition of $\mathcal{Q}$.
\end{proof}

\begin{corollary}\label{cor:output-space}
All latent-space error bounds transfer to the PDE output space via
\[
\|u^\star-\hat u_{N,\theta}\|\le L_{\mathcal Q}\,\Big\|s-\sum_{i=1}^N \widehat c_i \mathscr B_i\Big\|
+\varepsilon_{\mathrm{disc}}(\tilde N).
\]
\end{corollary}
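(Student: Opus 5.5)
The bound follows by combining Proposition~\ref{prop:ssm-equals-corr} with the Lipschitz property of $\mathcal Q$ and the quadrature estimate of Lemma~\ref{lem:trap}.

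First, Proposition~\ref{prop:ssm-equals-corr} gives the representation $\hat u_{N,\theta}=\mathcal Q(z_{N+1})$ with $z_{N+1}=\sum_{i=1}^N \widehat c_i\mathscr B_i$. Here the coefficients $\widehat c_i$ are the ones actually produced by the SSM blocks through the discrete inner product $\langle\cdot,\cdot\rangle_{\tilde N}$.

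Second, I compare with an idealized output. Let $\widetilde c_i$ be the coefficients that would be obtained by replacing the discrete correlation with the exact $\mathcal H$ inner product, and set $\widetilde z_{N+1}=\sum_{i=1}^N\widetilde c_i\mathscr B_i$. The triangle inequality splits the error as
\[
\|u^\star-\hat u_{N,\theta}\|\le \|\mathcal Q(s)-\mathcal Q(\widetilde z_{N+1})\|+\|\mathcal Q(\widetilde z_{N+1})-\mathcal Q(z_{N+1})\|.
\]
The first term is bounded by $L_{\mathcal Q}\|s-\widetilde z_{N+1}\|_{\mathcal H}$ using Lipschitz continuity, exactly as in the proof of Theorem~\ref{thm:projection}. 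Identifying $\widetilde c_i$ with the $\widehat c_i$ appearing in the latent bound, this is the first term of the claim.

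The second term is the discretization defect. By orthonormality of the $\mathscr B_i$ and the Lipschitz property, it is bounded by $L_{\mathcal Q}\big(\sum_i|\widetilde c_i-\widehat c_i|^2\big)^{1/2}$. Lemma~\ref{lem:trap} controls each summand by $\varepsilon_{\mathrm{disc}}(\tilde N)$. Following the convention of Theorem~\ref{thm:full-err}, I would absorb this quantity, with its constant $L_{\mathcal Q}\sqrt N$ for fixed $N$, into the symbol $\varepsilon_{\mathrm{disc}}(\tilde N)$. This is legitimate because Lemma~\ref{lem:trap} only asserts the existence of some sequence decreasing to zero, and the rescaled sequence still decreases to zero for fixed $N$.

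The main obstacle is this bookkeeping step. One must make precise that the $\widehat c_i$ in the latent term and the $\widehat c_i$ in the network output can be distinguished, and that the quadrature error enters additively after rescaling. Lemma~\ref{lem:trap} is stated only for pairs $(s,\mathscr B_i)$, while Proposition~\ref{prop:ssm-equals-corr} evaluates correlations against $z_i$, so the class in Lemma~\ref{lem:trap} must be enlarged to cover the iterates $z_i$. I would justify this by noting that each $z_i$ is a finite combination of TM functions, hence continuous on the circle. The remaining steps are direct.
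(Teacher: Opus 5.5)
Your decomposition proves a different inequality from the one stated, and the step meant to reconcile the two fails. The $\widehat c_i$ are not free symbols. By the paper's setup (\ref{eq:agg-update}) they are exactly the numbers the blocks output and aggregate. So, by Lemma~\ref{lem:aggregation} and Proposition~\ref{prop:ssm-equals-corr}, the $\widehat c_i$ in the latent term of the corollary are the same ones in $\hat u_{N,\theta}=\mathcal Q(z_{N+1})$. There is nothing to ``distinguish'', and you cannot rename your idealized $\widetilde c_i$ as $\widehat c_i$. With the notation kept straight, your argument gives
\[
\|u^\star-\hat u_{N,\theta}\|\le L_{\mathcal Q}\Big\|s-\sum_{i=1}^N\widetilde c_i\mathscr B_i\Big\|+L_{\mathcal Q}\Big(\sum_{i=1}^N|\widetilde c_i-\widehat c_i|^2\Big)^{1/2},
\]
with the second term at best $L_{\mathcal Q}\sqrt N\,\varepsilon_{\mathrm{disc}}(\tilde N)$. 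The latent term has the wrong coefficients, and the discretization term carries an extra factor $L_{\mathcal Q}\sqrt N$.

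Absorbing that factor is not bookkeeping. The $\varepsilon_{\mathrm{disc}}(\tilde N)$ in the statement is the $N$-independent sequence of Lemma~\ref{lem:trap}, the same one used in Theorems~\ref{thm:full-err} and~\ref{thm:final-conv}. An $N$-dependent replacement destroys exactly the regime $N\to\infty$ in which these bounds are used.

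Your per-coefficient bound is also not available. Lemma~\ref{lem:trap} only covers $f=s$. For $\widetilde c_i=\langle z_i,\mathscr B_i\rangle_{\mathcal H}$ the defect is $\sum_{k<i}\widehat c_k\big(\langle\mathscr B_k,\mathscr B_i\rangle_{\tilde N}-\langle\mathscr B_k,\mathscr B_i\rangle_{\mathcal H}\big)$. Controlling it needs quadrature bounds for pairs of TM functions, and it scales with $\sum_k|\widehat c_k|$. Continuity of $z_i$ gives no uniform rate either, since $z_i$ itself changes with $\tilde N$. If instead $\widetilde c_i$ means rerunning the whole recursion with exact inner products, the iterates differ and the errors compound.

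None of this is needed, because the quadrature error already sits inside the computed $\widehat c_i$. The identity $z_{N+1}=\sum_{i=1}^N\widehat c_i\mathscr B_i$ is exact for whatever numbers the blocks produce. The ``$\approx$'' in Proposition~\ref{prop:ssm-equals-corr} only concerns how close each $\widehat c_i$ is to an exact inner product, and the corollary never uses that. Since $u^\star=\mathcal Q(s)$, Lipschitz continuity gives directly
\[
\|u^\star-\hat u_{N,\theta}\|=\|\mathcal Q(s)-\mathcal Q(z_{N+1})\|\le L_{\mathcal Q}\Big\|s-\sum_{i=1}^N\widehat c_i\mathscr B_i\Big\|_{\mathcal H}.
\]
The stated bound follows because $\varepsilon_{\mathrm{disc}}(\tilde N)\ge 0$ (it dominates an absolute value in (\ref{eq:disc-consistency})), so the additive term is pure slack. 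This is the reasoning the paper relies on: it records the corollary without a separate proof right after Proposition~\ref{prop:ssm-equals-corr}, using the same Lipschitz transfer as in Theorems~\ref{thm:projection} and~\ref{thm:full-err}.
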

\section{Transfer function}\label{appen_trans}
We consider a (finite) Blaschke product
\begin{equation}
  H(z)
  \;=\;
  \prod_{j=1}^{n}\frac{1 - p_j z}{\,z - p_j\,},
  \qquad |p_j|<1,
  \label{eq:blaschke}
\end{equation}
and convert it into a single ratio of polynomials whose coefficients match 
the parameterization used to train SSMs.

\paragraph{Polynomial expansion and $z^{-1}$ form.}
Denote numerator and denominator polynomials
\begin{equation}
  B_{\mathrm{poly}}(z)=\prod_{j=1}^{n}(z-p_j),
  \qquad
  A_{\mathrm{poly}}(z)=\prod_{j=1}^{n}(1-p_j z),
  \label{eq:ABpoly}
\end{equation}
so that $H(z)=\frac{A_{\mathrm{poly}}(z)}{B_{\mathrm{poly}}(z)}$.
Let $d=\deg B_{\mathrm{poly}}=\deg A_{\mathrm{poly}}=n$.
To obtain the form with a unit constant term in the denominator,
divide numerator and denominator by $z^{d}$ and then normalize:
\begin{equation}
  \widetilde{H}(z)
  \;=\;
  \frac{\sum_{k=0}^{d}\,\alpha_k z^{-k}}
       {\sum_{k=0}^{d}\,\beta_k z^{-k}}
  \;\; \xrightarrow{\text{normalize}}\;\;
  h_0 \;+\; \sum_{k=1}^{d} \frac{b_k}{1}\,z^{-k}
  \;\Big/ \;
  \Big(1 \;+\; \sum_{k=1}^{d} a_k z^{-k}\Big).
  \label{eq:zminus1}
\end{equation}
The SSM coefficients are then reduced as:
\[
  h_0 \!=\! \frac{\alpha_0}{\beta_0},
  \qquad
  b_k \!=\! \frac{\alpha_k}{\beta_0},
  \qquad
  a_k \!=\! \frac{\beta_k}{\beta_0},
  \quad k=1,\dots,d.
\]

\paragraph{Example ($n=2$).}
With $p_1,p_2\in\mathbb{C}$, expand
\[
  B_{\mathrm{poly}}(z)=(z-p_1)(z-p_2)
  = z^2 - (p_1{+}p_2)z + p_1p_2,
\]
\[
  A_{\mathrm{poly}}(z)=(1-p_1 z)(1-p_2 z)
  = 1 - (p_1{+}p_2)z + (p_1p_2)z^2.
\]
Divide by $z^2$ to get polynomials in $z^{-1}$ and normalize by the denominator's constant term
($\beta_0=p_1p_2$), yielding
\[
  H(z)
  \;=\;
  \frac{1 - (p_1{+}p_2)z^{-1} + (p_1p_2) z^{-2}}
       {\,p_1p_2 - (p_1{+}p_2)z^{-1} + z^{-2}\,}
  \;=\;
  \frac{h_0 + b_1 z^{-1} + b_2 z^{-2}}{1 + a_1 z^{-1} + a_2 z^{-2}},
\]
with
\[
  h_0=\frac{1}{p_1p_2},\quad
  b_1=-\frac{p_1+p_2}{p_1p_2},\quad
  b_2=1,\qquad
  a_1=-\frac{p_1+p_2}{p_1p_2},\quad
  a_2=\frac{1}{p_1p_2}.
\]

\paragraph{Efficient computation for large \texorpdfstring{$n$}{n}.} Direct symbolic expansion scales poorly. Instead, we multiply degree-1 polynomials using FFT-based
convolution. Represent each factor by its coefficient vector:
\[
  (z-p_j) \;\leftrightarrow\; [1,\,-p_j],\qquad
  (1-p_j z) \;\leftrightarrow\; [1,\,-p_j],
\]
and iteratively convolve to form $B_{\mathrm{poly}}$ and $A_{\mathrm{poly}}$.
By the convolution theorem, polynomial multiplication is element-wise in the frequency domain,
giving $\mathcal{O}(d\log d)$ complexity. After both polynomials are assembled, convert to $z^{-1}$ by dividing by $z^d$, then normalize by the denominator's constant term to obtain $(h_0,\{a_k\},\{b_k\})$ as in~\ref{eq:zminus1}.
\end{document}